\providecommand{\keywords}[1]
{
  \small	
  \textbf{\textit{Keywords---}} #1
}
\begin{document}
\title{Three-way Decisions with Evaluative Linguistic Expressions}
\author{Stefania Boffa \orcidlink{0000-0002-4171-3459}  \and
Davide Ciucci \orcidlink{0000000280837809}
}

\newtheorem{definition}{Definition}
\newtheorem{corollary}{Corollary}
\newtheorem{remark}{Remark}
\newtheorem{example}{Example}
\newtheorem{proposition}{Proposition}
\newtheorem{theorem}{Theorem}

\maketitle
\begin{abstract}
    We propose a linguistic interpretation of three-way decisions, where  the regions of acceptance, rejection, and non-commitment are constructed by using the so-called evaluative linguistic expressions, which  are expressions of natural language such as small, medium, very short, quite roughly strong, extremely good, etc. Our results highlight new connections between two different research areas: three-way decisions and
the theory of evaluative linguistic expressions.
\end{abstract}
\keywords{Three-way decisions, Rough sets, Probabilistic rough sets, Evaluative linguistic expressions, Explainable Artificial Intelligence} 
\section{Introduction}
The theory of three-way decisions (TWD) divides a finite and non-empty universe into three disjoint sets, which are called positive, negative, and boundary regions. These regions respectively induce positive, negative, and boundary rules: a positive rule makes a decision of acceptance, a negative rule makes a decision of rejection, and a boundary rule makes an abstained or non-committed decision \cite{yao2009three,yao2010three}. The concept of three-way decisions was originally introduced in Rough Set Theory  
\cite{yao2009three,pawlak1982rough}   
and until today, it has been widely studied and applied to many decision-making problems (see \cite{pauker1980threshold,4028275,lin2002granular,skowron2006conflict} for some examples). Thus, several approaches have been proposed to generate the three regions; one of them is based on probabilistic rough sets, which generalizes probabilistic rough sets \cite{pawlak1988rough,yao2008probabilistic} where the three regions are constructed using a pair of thresholds and the notion of conditional probability (in this case, the regions are called probabilistic positive, negative, and boundary regions). 

The contribution of this article is to provide a linguistic interpretation of the positive, negative, and boundary regions. So, we propose a three-way decision method based on the concept of \emph{evaluative linguistic expressions},  which  are expressions of natural language such as \emph{small, medium, very short, quite roughly strong, extremely good, etc}. These are already considered in the majority of applications of
fuzzy modelling. Since we use evaluative linguistic expressions to evaluate the size of sets, we focus on the expressions involving the adjectives \emph{small}, \emph{medium}, and \emph{big} that can be preceded by an adverb; examples are \emph{very small}, \emph{roughly medium}, and \emph{extremely big}.  Mathematically, an evaluative linguistic expression is modelled by a function $Ev: [0,1]\to [0,1]$.
The formal theory of evaluative linguistic expressions is introduced and explained in \cite{novak2008comprehensive,novak2015evaluative,novak2015fuzzy,novak2016insight}.


The positive, negative, and boundary regions of a non-empty and finite universe $U$ are defined here starting from a subset $X$ of $U$, an equivalence relation $\mathcal{R}$ on $U$ (i.e. $\mathcal{R}$ is reflexive, symmetric and transitive), an evaluative linguistic expression $Ev$, and a pair of thresholds $(\alpha, \beta)$ with $0 \leq  \beta < \alpha \leq 1$. Then, an object $x$ belongs to the positive region when \emph{the size  of  $[x]_{\mathcal{R}} \cap X$ evaluated w.r.t. $Ev$ is at least $\alpha$}, where $[x]_{\mathcal{R}}$ is the equivalence class of $x$ w.r.t. $\mathcal{R}$. Analogously,  $x$ belongs to the negative region when \emph{the size  of  $[x]_{\mathcal{R}} \cap X$ evaluated w.r.t. $Ev$ is at most $\beta$}. Finally, the remaining elements form the boundary region. In order to obtain the three regions, the size of $X \cap [x]_{\mathcal{R}}$ is quantified using a fuzzy measure \cite{choquet1954theory,sugeno1974theory}.

The role of evaluative linguistic expressions in the context of three-way decisions can be better understood with the following example. 
\begin{example}
    Suppose that the number of buses between the University of Buenos Aires and the rest of the city has to be increased from 7 am to 8 am. Thus, we intend to understand which city areas need buses the most, as resources are limited. Let us denote the areas of the city with $A_1, \ldots, A_n$ and map each area $A_i$ with the set $S_{A_i}$ made of all students of the university who live in $A_i$. Thus, $S_{A_1}, \ldots, S_{A_n}$ can be seen as the equivalence classes w.r.t. the relation $\mathcal{R}$ on the set of all students of the University of Buenos Aires living in the city:  $x \mathcal{R} y$ if and only if $x$ and $y$ live in the same area. 
Based on a survey, we also consider a set $X$ made of all students that usually take a bus to the university in the slot time [7 am, 8 am]. We also choose $(\alpha,\beta)=(0.3,0.6)$ and $Ev=extemely \ big$. 
We construct three regions in the following way. The positive region is the union of $S_{A'_1}, \ldots, S_{A'_k}$ (with $\{A'_1, \ldots, A'_k\} \subseteq \{A_1, \ldots, A_n\}$) so that the amount of students of $S_{A_i}$ that take a bus from 7 am to 8 am is ``extremely big'' with a value of at least 0.6. Similarly,  the negative region is the union of $S_{A^*_1}, \ldots, S_{A^*_h}$ (with $\{A^*_1, \ldots, A^*_h\} \subseteq \{A_1, \ldots, A_n\}$) so that the amount of students of $S_{A^*_i}$ that take a bus from 7 am to 8 am is extremely big with a value of at most 0.3. All other students form the boundary region. The final decision is immediate: the buses are certainly increased for the areas $A'_1, \ldots, A'_k$, but not for $A^*_1, \ldots, A^*_h$. Furthermore, the decision is postponed for the remaining areas (thati is, for each $A_i \notin \{A'_1, \ldots, A'_k\} \cup \{A^*_1, \ldots, A^*_h\}$). In order to make a decision in those areas,  for example, we could take into account the workers (besides the students)  that need a bus in the slot of time [7 am, 8 am].
\end{example}

The choice of $Ev$ depends on the context where the three regions are used.
Indeed, in the previous example, we have chosen \emph{extremely big} in order to select the areas where a large number of students catch the bus from 7 am to 8 am. However, if we focus on the inverse problem (namely we need to eliminate some existing bus rides), then we should identify the areas where there are fewer students taking the bus in the time slot [7 am, 8 am]. Therefore, in this case, the evaluative linguistic expression \emph{extremely small} is more appropriate to construct the three regions.

A significant contribution of this article is  providing a linguistic and novel interpretation of the positive, negative, and boundary regions already determined with probabilistic rough sets. 
Consequently, the reasons for decisions of acceptance, rejection, and non-commitment can be explained in terms of expressions of natural language. Of course, the advantage is that non-technical users dealing with TWD models can better understand the  reliability of the procedures related to the final decisions. This is in line with the scope of \emph{Explainable Artificial Intelligence (XAI)}, which is a new
approach to AI emphasizing the ability of machines to give sound motivations about their decisions and behaviour \cite{moral2021explainable}.

The article is organized as follows. The next section reviews
some basic notions  regarding probabilistic three-way decisions and the concept of evaluative linguistic expressions. Also, the notion of fuzzy measure is recalled. Section \ref{sec:3} presents a new model of three-way decisions based on the theory of evaluative linguistic expressions. As a consequence, a linguistic generalization of Pawlak rough sets is introduced.  Finally, Section \ref{sec:4} connects the TWD models based on evaluative linguistic expressions and  probabilistic rough sets. 
In particular, confining to the evaluative linguistic expressions modelled by increasing functions, we find the class of thresholds so that the corresponding probabilistic positive, negative, and boundary regions are equal to those generated by a given evaluative linguistic expression.







\section{Preliminaries}

In the following, we consider a finite universe $U$, a subset $X$ of $U$, and an equivalence relation $\mathcal{R}$ on $U$ (i.e. $\mathcal{R}$ is reflexive, symmetric, and transitive). Moreover, we indicate the equivalence class of $x \in U$ w.r.t. $\mathcal{R}$ with $[x]_{\mathcal{R}}$. 

\subsection{Three-way decisions with probabilistic rough sets}
This subsection recalls the fundamental notions of three-way decisions based on probabilistic rough sets.


Viewing $X$ and $[x]_{\mathcal{R}}$ as events of $U$, the symbol $Pr(X|[x]_{\mathcal{R}})$ denotes the \emph{conditional probability} of $X$ given $[x]_{\mathcal{R}}$, i.e.
	\begin{equation} \label{eq:pro}
	    Pr(X|[x]_{\mathcal{R}})=\frac{|[x]_\mathcal{R} \cap X|}{|[x]_\mathcal{R}|}.
	\end{equation}
Then, three special subsets of $U$  are determined by using \eqref{eq:pro} and a pair of thresholds as shown by the next definition. 
\begin{definition} \label{def:PRS}
Let $\alpha, \beta \in [0,1]$ such that $\beta < \alpha$, the $(\alpha,\beta)$-\emph{probabilistic positive, negative} and \emph{boundary regions}  are respectively the following:
\begin{enumerate}
\item[(i)] $POS_{(\alpha,\beta)}(X)=\{x \in U \ | \ Pr(X | [x]_{\mathcal{R}}) \geq \alpha\}$, 
\item[(ii)] $NEG_{(\alpha,\beta)}(X)=\{x \in U \ | \ Pr(X|[x]_{\mathcal{R}}) \leq \beta\}$, 
\item[(iii)] $BND_{(\alpha,\beta)}(X)=\{x \in U \ | \ \beta < Pr(X|[x]_{\mathcal{R}}) < \alpha\}$.
\end{enumerate}
\end{definition}
We put
\begin{equation} 
\mathcal{T}_{(\alpha,\beta)}(X)=\{POS_{(\alpha, \beta)}(X), NEG_{(\alpha, \beta)}(X), BND_{(\alpha, \beta)}(X)\}
\end{equation}
and we say that $\mathcal{T}_{(\alpha,\beta)}(X)$ is a \emph{tri-partition} of $U$ due to the following remark \footnote{By a tri-partition, we mean a partition of $U$ made of three equivalence classes. On the other hand,  $\{POS_{(\alpha, \beta)}(X), NEG_{(\alpha, \beta)}(X), BND_{(\alpha, \beta)}(X)\}$ could collapse into a bi-partition or the whole universe when one or two of its sets are empty.}. 
\begin{remark}
The three regions of of $\mathcal{T}_{(\alpha,\beta)}(X)$ are mutually disjoint, i.e. $A \cap B =\emptyset$ for each $A, B \in \{POS_{(\alpha,\beta)}(X), NEG_{(\alpha,\beta)}(X), BND_{(\alpha,\beta)}(X)\}$ with $A \neq B$, and they cover the universe $U$, i.e.  
\begin{equation} \label{eq:tr1}
POS_{(\alpha,\beta)}(X) \cup NEG_{(\alpha,\beta)}(X) \cup BND_{(\alpha,\beta)}(X)=U.
\end{equation}
\end{remark}

In the context of three-way decisions,  the following rules are considered: let $x \in U$,
\begin{itemize}
\item if $x \in POS_{(\alpha,\beta)}(X)$, then $x$ is accepted;
\item if $x \in NEG_{(\alpha,\beta)}(X)$, then $x$ is rejected;
\item if $x \in BND_{(\alpha,\beta)}(X)$, then we abstain on $x$.
\end{itemize}
The values $Pr(X | [x]_\mathcal{R})$ represents the \emph{accuracy} or
\emph{confidence} of the rules:
\begin{itemize}
\item the higher  $Pr(X | [x]_\mathcal{R})$ is, the more confident we are that  $x \in POS_{(\alpha,\beta)}(X)$ is correctly accepted, 
\item the lower $Pr(X | [x]_\mathcal{R})$ is, the more confident we are that $x \in NEG_{(\alpha,\beta)}(X)$ is correctly rejected.  
\end{itemize}

Definition \ref{def:PRS} is strictly related to the notion of probabilistic rough sets. 
\begin{definition}
The \emph{$(\alpha,\beta)$-probabilistic rough set} of $X$ is the pair $$(\mathcal{L}_{(\alpha,\beta)}(X),\mathcal{U}_{(\alpha,\beta)}(X)),$$ where 
\begin{center}
$\mathcal{L}_{(\alpha,\beta)}(X)=POS_{(\alpha, \beta)}(X)$ \ \ and \ \  $\mathcal{U}_{(\alpha,\beta)}(X)=POS_{(\alpha, \beta)}(X) \cup BND_{(\alpha, \beta)}(X)$, 
\end{center}
which are respectively called $(\alpha,\beta)-$ \emph{lower and upper approximations} of $X$.
\end{definition}

\begin{remark} \label{rem:1}
When $\alpha=1$ and $\beta=0$, $(\mathcal{L}_{(\alpha,\beta)}(X),\mathcal{U}_{(\alpha,\beta)}(X))$ is the rough set $(\mathcal{L}(X),\mathcal{U}(X))$ of $X$ defined by Pawlak in \cite{pawlak1982rough}, namely 
\begin{equation} \label{eq:RS}
(\mathcal{L}(X),\mathcal{U}(X))=(\{ x \in U \ | \ [x]_R \subseteq X\},\{x \in U \ | \ [x]_R \cap X \neq \emptyset\}).
\end{equation}
The sets $\mathcal{L}(X)$ and $\mathcal{U}(X)$ are respectively called \emph{lower} and \emph{upper approximations} of $X$ w.r.t. $\mathcal{R}$. 
\end{remark}

\subsection{Evaluative Linguistic Expressions} \label{sub:expressions}

This subsection reviews concepts and results that are found in \cite{novak2008comprehensive,czbook} and it recalls the notion of fuzzy measure.  

\emph{Evaluative linguistic expressions} are special expressions of natural language, which people commonly employ to evaluate, judge, estimate, and in many other situations. Examples of evaluative linguistic expressions are \emph{small, medium, big, about twenty-five, roughly one hundred, very short, more or less deep, not very tall, roughly warm or medium-hot}, etc. For convenience, we will often omit the adjective “linguistic” and use only the term “evaluative expressions”. 
The simplest evaluative expressions are called \emph{pure evaluative expressions} and have the following structure:
\begin{center}	
 \emph{$\langle \mbox{linguistic hedge} \rangle \langle \mbox{TE-adjective}\rangle,$}
 \end{center}
where 
\begin{itemize}
\item a linguistic hedge is an adverbial modification such as \emph{very, roughly, approximately, significantly} and 
\item a TE-adjective is an adjective such us \emph{good, medium}, \emph{big}, \emph{short}, etc. TE stands for \emph{trichotomous evaluative}, indeed TE-adjectives typically form pairs of antonyms like \emph{small} and  \emph{big} completed by a middle member, which is \emph{medium} in the case of \emph{small} and \emph{big}. Other examples are “\emph{weak, medium-strong,} and 
\mbox{strong}" and “\emph{soft, medium-hard,} and \emph{hard}". 
\end{itemize}
The \emph{empty linguistic hedge} is employed to deal with evaluative expressions made of only a TE-adjective; hence,  \emph{small}, \emph{medium}, and \emph{big} are considered evaluative expressions. Other pure evaluative expressions are the fuzzy numbers like \emph{about twenty-five}. Two or more pure evaluative expressions can be connected to form \emph{negative evaluative expressions} like \emph{“NOT very small"} and \emph{compound evaluative expressions} like \emph{“very expensive AND extremely small"} and \emph{“very expensive OR extremely small"}. 

The semantics of evaluative expressions is based on the essential concepts of \emph{context}, 
\emph{intension}, and \emph{extension}. 
\begin{itemize}
\item The \emph{context} is a state of the world at a given time  and place in which an evaluative expression appears. Each context is represented by a linearly ordered scale, which is bounded by $s$ and $b$. Moreover, a context is given by a triple $\omega=\langle s,m,b \rangle$, where $s$ is the “most typical” small value, $m$ is the “most typical” medium value, and $b$ is the “most typical” big value.   
For example, suppose that evaluative expressions are used to evaluate the size of apartments. If we are thinking of apartments for one person, then we could choose 
 $\omega_1=\langle 40, 70, 100 \rangle$ as context, which means that flats measuring $40 \  m^2$, $70 \ m^2$, and $100 \ m^2$ are typically small, medium and big, respectively. On the other hand, when changing context and thinking of apartments for a family of 5 people, the  context $\omega_5=\langle 70, 120, 160 \rangle$ is more appropriate. 
%
\item The \emph{intension} of an evaluative expression 
is a function mapping each context into a fuzzy set of a given universe. Taking up  the previous example, we consider a universe made of four apartments $\{a_1,a_2,a_3,a_4\}$, then the intension of $small$ is the map $Int_{small}$ that assigns to the context $\omega_5$ the fuzzy set $A_{\omega_5}$ so that $A_{\omega_5}(a_i)$ is the degree to which \emph{$a_i$ is small in the context $\omega_5$} (namely, \emph{$a_i$ is small for 5 people}).  
\item The \emph{extension} of an evaluative expression $Ev$ is a fuzzy set determined by the intension of $Ev$, given a context $\omega$. Concerning the previous example, $Int_{small}(\omega_5)=A_{\omega_5}$ is an example of an extension of \emph{small}.
\end{itemize}


In this article, we confine to the TE adjectives \emph{small}, \emph{medium}, and \emph{big} because we use evaluative expressions to evaluate the size of sets.  So, let $X$ be a subset of a universe $U$, we will say that the size of $X$ w.r.t. to the size of $U$ is \emph{very small}, \emph{extremely big}, etc. Furthermore, we confine to the \emph{standard context}, which is $\langle 0,0.5,1 \rangle$. Finally, since sizes are expressed by means of a fuzzy measure (by Example \ref{ex:fuzzymeasure},  the measure of the size of  a set $X$ is a  value of $[0,1]$), the extensions of our evaluative expressions are functions from $[0,1]$ to $[0,1]$, which have a specific formula.  The extension of an evaluative expression like $\langle linguist \ hedge \rangle \langle TE-adjective\rangle$ with $TE-adjective \in \{small, medium, big\}$ is obtained by composing two functions, one models the linguistic hedge and the other models the TE-adjective. The function describing a linguistic hedge depends on three parameters, which are experimentally estimated (see \cite{czbook} for more details). 

In what follows, we provide the formula of $\neg Sm:[0,1] \to [0,1]$, $BiVe:[0,1] \to [0,1]$, and $BiEx:[0,1] \to [0,1]$, which are the extensions of the evaluative expressions \emph{not small}, \emph{very big}, and \emph{extremely big}, where the context $\langle 0, 0.5, 1 \rangle$ is fixed \footnote{We have got the formulas of $BiVe$ and $BiEx$ using the function $\nu_{a,b,c}(LH(\omega^{-1}))$ and Table 5.1 given in \cite{czbook}. 
Concerning the formula of $\neg Sm$, we have considered that $\neg Sm(x)=1-Sm(x)$. After that, we have found the formula of $Sm$ using the function $\nu_{a,b,c}(RH(\omega^{-1}))$ and Table 5.1 of \cite{czbook}.}. 

\begin{equation} \label{eq:nSm}
\neg Sm(x)=
\begin{dcases}
1 & \mbox{ if }  x \in [0.275,1],\\ 
1-\frac{(0.275-x)^2}{0.02305} & \mbox{ if }  x \in (0.16,0.275) \\ \frac{(x-0.0745)^2}{0.01714} & \mbox{ if }  x \in (0.0745, 0.16]  \\
0 & \mbox{ if }  x \in [0, 0.0745]
\end{dcases}
\end{equation}
\begin{equation} \label{eq:vBig}
BiVe(x)=
\begin{dcases}
1 & \mbox{ if }  x \in [0.9575, 1],\\ 
1-\frac{(0.9575-x)^2}{0.00796} & \mbox{ if }  x \in [0.895,0.9575), \\ \frac{(x-0.83)^2}{0.00828} & \mbox{ if }  x \in (0.83, 0.895),  \\
0 & \mbox{ if }  x \in [0, 0.83].
\end{dcases}
\end{equation}
\begin{equation} \label{eq:eBig}
BiEx(x)=
\begin{dcases}
1 & \mbox{ if }  x \in [0.995, 1],\\ 
1-\frac{(0.995-x)^2}{0.00495} & \mbox{ if }  x \in [0.95,0.995), \\ \frac{(x-0.885)^2}{0.00715} & \mbox{ if }  x \in (0.885, 0.95),  \\
0 & \mbox{ if }  x \in [0, 0.885].
\end{dcases}
\end{equation}
\begin{remark} \label{rem:new} The evaluative expressions $\neg Sm$, $BiVe$, and $BiEx$ have a special role: they are respectively used to construct the formula of fuzzy quantifiers \emph{many}, \emph{most}, and  \emph{almost all} \cite{novak2008formal}.
\end{remark}

A further class of linguistic expressions is $\{\Delta_t: [0,1] \to \{0,1\}\ | \ t \in [0,1]\}$, where given $t \in [0,1]$ the formula of $\Delta_t$ is the following: let $a \in [0,1]$,
\begin{equation} \label{eq:delta}    \Delta_t(a)=\begin{cases} 1 & \mbox{ if } a \geq t\\ 0 & \mbox{ otherwise. } \end{cases}
\end{equation}

In the sequel, we need  the notion of fuzzy measure \cite{choquet1954theory,sugeno1974theory}. 
\begin{definition}
Let $U$ be a finite universe,  a mapping $\varphi:2^U \to \mathbb{R}$ is called \emph{fuzzy measure} if and only if
\begin{enumerate} [(a)]
\item $\varphi(\emptyset)=0$;
\item if $X \subseteq Y$ then $\varphi(X) \leq \varphi(Y)$, for each $X,Y \subseteq U$ (monotonicity). 
\end{enumerate}
\end{definition}
A fuzzy measure $\varphi$  is called \emph{normalized} or \emph{regular} if $\varphi(U)=1$. 

In this paper, we focus on the  normalized fuzzy measure given by the next example.
\begin{example} \label{ex:fuzzymeasure}
Let $U$ be a finite universe, the function $f:2^U \to \mathbb{R}$ that assigns  
$\dfrac{|Y|}{|U|}$ to each $Y \subseteq U$ is a fuzzy measure.
 
 The value $\dfrac{|Y|}{|U|}$ belongs to [0,1] and measures  \emph{``how much $Y$ is large with respect to $U$ in the scale $[0,1]$"}.
\end{example}

Let us observe that in Probability theory  $\dfrac{|Y|}{|U|}$ represents  \emph{``how likely the event $Y$ is to occur"}. 

\section{Three-way decisions with linguistic expressions} \label{sec:3}
This subsection proposes a novel model for three-way decisions, which is based on the concept of evaluative linguistic expressions previously described.

In the sequel, we use the symbol $\mathcal{E}$ to denote the collection of the extensions of all evaluative expressions in the context $\langle 0, 0.5, 1 \rangle$. 


Therefore, let $Ev \in \mathcal{E}$, let $X \subseteq U$, and let $\alpha,\beta \in [0,1]$ with $\beta < \alpha$,  three  regions of $U$ are determined. 
In particular, the region of a given element $x \in U$ is found by taking into account the following steps:
\begin{enumerate}
\item computing $Ev\left(\dfrac{|X \cap [x]_\mathcal{R}|}{|[x]|_\mathcal{R}}\right)$, which is the evaluation of \emph{the size of $X \cap [x]_\mathcal{R}$ w.r.t. the size of $[x]_\mathcal{R}$} by using $Ev$;
\item comparing  $Ev\left(\dfrac{|X \cap [x]_\mathcal{R}|}{|[x]|_\mathcal{R}}\right)$ with the thresholds $\alpha$ and $\beta$.
\end{enumerate}
For example, regarding point 1, if \emph{Ev} models the evaluative expression 
``\emph{significantly big}", 
then $Ev\left(\dfrac{|X \cap [x]_\mathcal{R}|}{|[x]_\mathcal{R}|}\right)$ measures 
\begin{center}
\emph{``how much the size of $X \cap [x]_\mathcal{R}$ is significantly big w.r.t. the size of $[x]_\mathcal{R}$".} 
\end{center}
Equivalently, we are saying that 
\begin{center}
\emph{``the size of the set of the elements of $[x]_\mathcal{R}$ that also belong to $X$ is significantly big with the truth degree $Ev\left(\dfrac{|X \cap [x]_\mathcal{R}|}{|[x]_\mathcal{R}|}\right)$".}  
\end{center}
 
Observe that $\dfrac{|X \cap [x]_{\mathcal{R}}|}{|[x]_{\mathcal{R}}|}$ syntactically coincides with the conditional probability (see \eqref{eq:pro}), but here it has a different interpretation: it is the fuzzy measure specified by Example \ref{ex:fuzzymeasure}.

Formally, the three regions of $U$ determined by an evaluative expression are given by the following definition.

\begin{definition} \label{def:ee}
Let $Ev \in \mathcal{E}$, 
 the $(\alpha,\beta)$-linguistic positive, negative, and boundary regions induced by $Ev$ are respectively the following:
\begin{enumerate}
    \item [(i)]  $POS^{Ev}_{(\alpha,\beta)}(X)=\left\{x \in U \ | \ Ev\left(\dfrac{|[x]_\mathcal{R} \cap X|}{|[x]_\mathcal{R}|}\right) \geq \alpha\right\}$;
    \item [(ii)] $NEG^{Ev}_{(\alpha,\beta)}(X)=\left\{x \in U \ | \ Ev\left(\dfrac{|[x]_\mathcal{R} \cap X|}{|[x]_\mathcal{R}|}\right) \leq \beta\right\}$;
     \item [(iii)] $BND^{Ev}_{(\alpha,\beta)}(X)=\left\{x \in U \ | \ \beta < Ev\left(\dfrac{|[x]_\mathcal{R} \cap X|}{|[x]_\mathcal{R}|}\right) < \alpha\right\}$.
\end{enumerate}
\end{definition}
We put 
\begin{equation} \label{lab:tr1}
\mathcal{T}_{(\alpha,\beta)}^{Ev}(X)=\{POS_{(\alpha,\beta)}^{Ev}(X), NEG_{(\alpha,\beta)}^{Ev}(X), BND_{(\alpha,\beta)}^{Ev}(X)\}
\end{equation}
and we say that $\mathcal{T}_{(\alpha,\beta)}^{Ev}(X)$ is a tri-partition of $U$.
\begin{remark} \label{rem:tri}
 The three regions of $\mathcal{T}_{(\alpha,\beta)}^{Ev}(X)$ are mutually disjoint, i.e. $A \cap B =\emptyset$ for each $A, B \in \{POS_{(\alpha,\beta)}^{Ev}(X), NEG_{(\alpha,\beta)}^{Ev}(X), BND_{(\alpha,\beta)}^{Ev}(X)\}$ with $A \neq B$, and they cover the universe $U$, i.e. 
\begin{equation} \label{eq:tr}
POS_{(\alpha,\beta)}^{Ev}(X) \cup NEG_{(\alpha,\beta)}^{Ev}(X) \cup BND_{(\alpha,\beta)}^{Ev}(X)=U.
\end{equation}

\end{remark}

\begin{remark}
Let us focus on the evaluative expressions \emph{not small}, \emph{very big}, \emph{extremely big}, and \emph{utmost}. The first three expressions are respectively modelled by \eqref{eq:nSm}, \eqref{eq:vBig}, and \eqref{eq:eBig}. 
 According to Remark \ref{rem:new}, $\neg Sm$, $Bi Ve$, and $Bi Ex$ respectively appear into the formula of quantifiers \emph{many}, \emph{most}, and \emph{almost all}. Moreover, as explained in \cite{boffa2021proposal} (see Lemma 4.5), considering that $X$ and $[x]_{\mathcal{R}}$ are crisp set,  $\neg Sm\left(\dfrac{|X \cap [x]_\mathcal{R}|}{|[x]_\mathcal{R}|}\right)$, $BiVe\left(\dfrac{|X \cap [x]_\mathcal{R}|}{|[x]_\mathcal{R}|}\right)$, and $BiEx\left(\dfrac{|X \cap [x]_\mathcal{R}|}{|[x]_\mathcal{R}|}\right)$ exactly coincide with the formula of quantifiers \emph{many}, \emph{most}, and \emph{almost all}. Hence,  
 they have the following meaning:
\begin{itemize}
    \item $\neg Sm\left(\dfrac{|X \cap [x]_\mathcal{R}|}{|[x]_\mathcal{R}|}\right)$ is degree to which ``\emph{many} objects of $[x]_\mathcal{R}$ are in $X$",
     \item $BiVe\left(\dfrac{|X \cap [x]_\mathcal{R}|}{|[x]_\mathcal{R}|}\right)$ is degree to which ``\emph{most} objects of $[x]_\mathcal{R}$ are in $X$",
      \item $BiEx\left(\dfrac{|X \cap [x]_\mathcal{R}|}{|[x]_\mathcal{R}|}\right)$ is degree to which ``\emph{almost all} objects of $[x]_\mathcal{R}$ are in $X$".
\end{itemize}
Moreover, the function $\Delta_1$ that is obtained by \eqref{eq:delta} and putting $t=1$, models the evaluative expression \emph{utmost} and corresponds to the quantifier \emph{all}. Therefore, $\Delta^1\left(\dfrac{|X \cap [x]_\mathcal{R}|}{|[x]_\mathcal{R}|}\right)$ is understood as the degree to which ``\emph{all} objects of $[x]_{\mathcal{R}}$ are in $X$".

Observe that here the universe of quantification coincides with  $[x]_{\mathcal{R}}$, which is always non-empty, considering that $\mathcal{R}$ is reflexive, hence $\{x\} \subseteq [x]_{\mathcal{R}}$. In mathematical logic, the assumption expressing that the universe of quantification must be non-empty 
is called \emph{existential import} (or \emph{presupposition}) 
\cite{chatti2013cube}. 
\end{remark}
\begin{remark}
Consider the evaluative expressions represented by \eqref{eq:delta}. Let $t \in [0,1]$, then $\Delta_t\left(\dfrac{|X \cap [x]_{\mathcal{R}}|}{|[x]_{\mathcal{R}}|}\right)$ is the degree to which “\emph{the size of the set of elements of $[x]_{\mathcal{R}}$ that also belong to $X$ is at least as
large as $t$ (in the scale [0,1])}". 
\end{remark}
\subsection{An illustrative example}
 \label{esempio}
In this subsection, we provide an example of how to use linguistic three-way decision to provide recommendations based on users's profile.

We consider a universe $U=\{u_1, \ldots, u_{32}\}$ made of users of online communities and the following equivalence relation $\mathcal{R}$ on $U$: let $x,y \in U$, $x \mathcal{R} y$ if and only if $x$ and $y$ belong to the same community. Therefore, $\mathcal{R}$ corresponds to the partition $\{C_1,\ldots,C_6\}$ of $U$, where $C_i$ is the set of users of $U$ belonging to the community $i$. In particular, we suppose that
\begin{multline*}
C_1=\{u_1, \ldots, u_5\}, \ C_2 = \{u_6, \ldots, u_{10}\}, \ C_3=\{u_{11},\ldots,u_{15}\}, \\
C_4=\{u_{16}, \ldots, u_{20}\}, \
C_5=\{u_{21},\ldots,u_{25}\}, \mbox{ and }  C_6=\{u_{26},\ldots,u_{32}\}.  
\end{multline*}

We use the symbol $X_T$ to denote the set of users of $U$ interested in a specific topic $T$.

For example, $$X_{Sport}=\{u_{10},u_{11},u_{12},u_{18}, u_{19}, u_{20}, u_{21}, u_{22}, u_{23},  u_{24},u_{26}\}$$
is the set of all users of $U$ interested in the topic \emph{Sport}.

Using three-way decisions based on evaluative expressions, we intend to select the most appropriate communities among $C_1, \ldots, C_6$ to which propose news related to the topic $T$.

If we choose $(\alpha,\beta)=(0.8,0.2)$ and the evaluative expression $\neg Sm$ corresponding to the fuzzy quantifier \emph{many}, then we decide to assign the news about the topic $T$ to the communities of $POS_{(0.8,0.2)}^{\neg Sm} (X_T)$. Indeed, enough users of $POS_{(0.8,0.2)}^{\neg Sm}(X_T)$ are interested in $T$:
$x \in POS_{(0.8,0.2)}^{\neg Sm}$ if and only if the degree to which \begin{center}  \emph{``many users of the community of $x$ are interested in $T$"}\end{center}
  is greater than or equal to 0.8. 

In the sequel, we determine the communities to which provide the news about $Sport$. To do this, we firstly compute the value $\dfrac{|X_{Sport} \cap [x]_\mathcal{R}|}{|[x]_\mathcal{R}|}$ for each $x \in U$:
\begin{equation} \label{eq:ex}
\dfrac{|X_{Sport} \cap [x]_\mathcal{R}|}{|[x]_\mathcal{R}|}=\begin{cases}
0 & \mbox{ if } x \in C_1,\\
0.14 & \mbox{ if } x \in C_6, \\
0.2 & \mbox{ if } x \in C_2, \\
0.4 & \mbox{ if } x \in C_3, \\ 
0.6 & \mbox{ if } x \in C_4, \\
0.8 & \mbox{ if } x \in C_5.
\end{cases}
\end{equation}

According to the definition of $\neg Sm$ that is given by \eqref{eq:nSm}, we get
$\neg Sm(0)=0$, $\neg Sm (0.14)=0.25$, $\neg Sm(0.2)=0.75$ and $\neg Sm (0.4) = \neg Sm(0.6)= \neg Sm(0.8)=1$. 

Consequently,
\begin{equation} 
 \neg Sm \left(\frac{|X_{Sport} \cap [x]_\mathcal{R}|}{|[x]_\mathcal{R}|}\right)= \begin{cases} 0 & \mbox{ if } x \in C_1 , \\ 0.25 & \mbox{ if } x \in C_6, \\ 0.75 & \mbox{ if } x \in C_2\\ 1 & \mbox{ if } x \in C_3 \cup C_4 \cup C_5. 
 \end{cases}
\end{equation}

Then, the positive, negative and boundary regions induced by $(0.8,0.2)$ and $\neg Sm$ are the following:
\begin{description}
    \item $POS^{\neg Sm}_{(0.8,0.2)}(X_{Sport})=\left\{x \in U \ | \ \neg Sm \left(\dfrac{|
    X_{Sport} \cap [x]_\mathcal{R}|}{|[x]_\mathcal{R}|}\right) \geq 0.8\right\}=C_3 \cup C_4 \cup C_5$,
    \vspace{0.2cm}
     \item $NEG^{\neg Sm}_{(0.8,0.2)}(X_{Sport})=\left\{x \in U \ | \ \neg Sm \left(\dfrac{|
    X _{Sport}\cap [x]_\mathcal{R}|}{|[x]_\mathcal{R}|}\right) \leq 0.2\right\}=C_1$,
      \vspace{0.2cm}
    \item $BND^{\neg Sm}_{(0.8,0.2)}(X_{Sport})=\left\{x \in U \ | \ 0.2 < \neg Sm \left(\dfrac{|
    X_{Sport} \cap [x]_\mathcal{R}|}{|[x]_\mathcal{R}|}\right) < 0.8\right\}= C_2 \cup C_6$.
\end{description}
The three regions lead the following decisions. Firstly, we choose to provide the news about the sport to the communities  $C_3$, $C_4$, and $C_5$ that form the positive region, considering that \emph{these contains many users interested in sport} with a degree that we think is enough high ($\geq 0.8$). Moreover, we require further analysis on the communities $C_2$ and $C_6$ forming the boundary region, before providing news about sports. For example, we could evaluate the interests of their users in the future or once new users join them. Finally, we surely do not provide sports news to $C_1$ because we think that not enough of its users are interested in  sports topics, indeed we consider the degree to which \emph{many users of $C_1$ are interested in sports} low ($\leq 0.2$).

\subsection{Linguistic Rough Sets}
Definition \ref{def:ee}  also leads to a 
novel generalization of Pawlak rough sets.

\begin{definition} \label{def:linguisticRS}
Let $Ev \in \mathcal{E}$, the $(\alpha,\beta)$-linguistic rough set of $X$ determined by $\mathcal{R}$ and $Ev$ is the pair $(\mathcal{L}_{(\alpha,\beta)}^{Ev}(X),\mathcal{U}_{(\alpha,\beta)}^{Ev}(X)),$ where
\begin{multline*}\mathcal{L}_{(\alpha,\beta)}^{Ev}(X)=\left\{x \in X \ | \ Ev\left(\dfrac{|[x]_{\mathcal{R}} \cap X|}{|[x]_{\mathcal{R}}|} \right) \geq \alpha \right\} \ \mbox{ and  } \ \\ \mathcal{U}_{(\alpha,\beta)}^{Ev}(X)=\left\{x \in X \ | \ Ev\left(\dfrac{|[x]_{\mathcal{R}} \cap X|}{|[x]_{\mathcal{R}}|} \right) > \beta \right\}.
\end{multline*}
$\mathcal{L}_{(\alpha,\beta)}^{Ev}(X)$ and $\mathcal{U}_{(\alpha,\beta)}^{Ev}(X)$ are respectively called $(\alpha,\beta)$-\emph{linguistic lower and upper approximation} of $X$ determined by $\mathcal{R}$ and $Ev$.
\end{definition}
Equivalently, by Definition \ref{def:ee}, we get 
\begin{center}
$\mathcal{L}^{Ev}_{(\alpha,\beta)}(X)=POS_{(\alpha,\beta)}^{Ev}(X)$ \ and \ $\mathcal{U}^{Ev}_{(\alpha,\beta)}(X)=POS_{(\alpha,\beta)}^{Ev}(X) \cup BND_{(\alpha,\beta)}^{Ev}(X)$.
\end{center}

 Let $x \in U$, the value $Ev\left(\dfrac{|[x]_{\mathcal{R}} \cap X|}{|[x]_{\mathcal{R}}|} \right)$ is viewed as the degree of confidence expressing \emph{how much we can trust that
$x$ belongs to $X$}.

The following is an illustrative example. 


\begin{example}
Consider Example \ref{esempio}. In terms of generalized rough sets, $X_{Sport}$ can be approximated by the $(0.8,0.2)$-linguistic rough set 
\begin{multline*}
(\mathcal{L}_{(0.8,0.2)}^{\neg Sm}(X_{Sport}), \mathcal{U}_{(0.8,0.2)}^{\neg Sm}(X_{Sport}))=(C_3 \cup C_4 \cup C_5, C_2 \cup C_3 \cup C_4 \cup C_5 \cup C_6) = \\ (\{u_{11}, \ldots, u_{25}\},\{u_{6}, \ldots, u_{32}\})\}. 
\end{multline*}
Each element $x \in U$ 
is associated with the value $\neg Sm\left(\dfrac{|X_{Sport} \cap [x]_{\mathcal{R}}}{[x]_{\mathcal{R}}}\right)$, which is understood as the degree of confidence expressing \emph{how much we can trust that $x$ is interested in sports}.
\end{example}

\section{Connection with TWD methods} \label{sec:4}
In this section, we find a link between the TWD methods based on probabilistic rough sets  and evaluative expressions. 
In particular, we fix
a finite universe $U$, a subset $X$ of $U$, an equivalence relation $\mathcal{R}$ on $U$, and a pair of thresholds $(\alpha,\beta)$ such that $0 \leq \alpha < \beta \leq 1$,  and we aim to determine for each evaluative expression $Ev$, the class of all pairs of thresholds like $(\alpha',\beta')$ so that $\mathcal{T}_{(\alpha',\beta')}(X)$ coincides with  $\mathcal{T}_{(\alpha,\beta)}^{Ev}(X)$. 

In this paper, we confine to  the class $\mathcal{E}^+ \subset \mathcal{E}$, which is made of all extensions that are increasing functions, i.e. let $Ev \in \mathcal{E}$, $Ev \in \mathcal{E}^+$ if and only if ``$Ev(x) \leq Ev(y)$ for each $x, y \in [0,1]$ such that $x \leq y$". Examples of evaluative expressions so that their extension is an increasing function, are \emph{not small}, \emph{very big}, and \emph{extremely big} (see \eqref{eq:nSm}, \eqref{eq:vBig}, and \eqref{eq:eBig}). 
However, there exist evaluative expressions like \emph{small} that are represented by a decreasing function and others like \emph{medium} that are represented by a non-monotonic function.  

In order to obtain the results of this section, we need to define the values $\alpha^{Ev}_1$, $\alpha^{Ev}_2$, $\beta^{Ev}_1$, and $\beta^{Ev}_2$, which are associated with 
$\mathcal{T}^{Ev}_{(\alpha,\beta)}(X)$, where $Ev \in \mathcal{E}$.

 
\begin{definition}
Let $Ev \in \mathcal{E}$. If $POS_{(\alpha,\beta)}^{Ev}(X), NEG_{(\alpha,\beta)}^{Ev}(X)$, $BND_{(\alpha,\beta)}^{Ev}(X) \neq \emptyset$, then we put
\begin{enumerate} \label{def:alpha}
 \item [(i)] $\alpha^{Ev}_1= \max\left\{\dfrac{|X \cap [x]_\mathcal{R}|}{|[x]_\mathcal{R}|} \ | \ x \in BND^{Ev}_{(\alpha,\beta)}(X)\right\}$,
    \item [(ii)] $\alpha^{Ev}_2= \min\left\{\dfrac{|X \cap [x]_\mathcal{R}|}{|[x]_\mathcal{R}|} \ | \ x \in POS^{Ev}_{(\alpha,\beta)}(X)\right\}$,
      \item [(iii)] $\beta^{Ev}_1= \max\left\{\dfrac{|X \cap [x]_\mathcal{R}|}{|[x]_\mathcal{R}|} \ | \ x \in NEG^{Ev}_{(\alpha,\beta)}(X)\right\},$ 
    \item [(iv)] $\beta^{Ev}_2=  \min\left\{\dfrac{|X \cap [x]_\mathcal{R}|}{|[x]_\mathcal{R}|} \ | \ x \in BND^{Ev}_{(\alpha,\beta)}(X)\right\}.$
\end{enumerate}
\end{definition}
\begin{example} \label{esempio2}
Consider the universe $U$, its subset $X_{Sport}$, and the pair of thresholds $(\alpha,\beta)$ that are defined by Example \ref{esempio}.  Then, the corresponding positive, negative, and boundary regions are the following: 
\begin{multline*}
POS_{(0.8,0.2)}^{\neg Sm}(X_{Sport})=C_3 \cup C_4 \cup C_5, \  NEG^{\neg Sm}_{(0.8,0.2)}(X_{Sport})=C_1, \ \mbox{ and } \\ 
BND^{\neg Sm}_{(0.8,0.2)}(X_{Sport})=C_2 \cup C_6.
\end{multline*}
Hence, by \eqref{eq:ex}, we get \footnote{Recall that the equivalence classes of $\{[x]_{\mathcal{R}} \ | \ x \in U\}$ are the sets $C_1,C_2,C_3,C_4,C_5$, and $C_6$.} 
\begin{multline*}
\left\{\dfrac{|X_{Sport} \cap [x]_\mathcal{R}|}{|[x]_\mathcal{R}|} \ | \ x \in POS^{Ev}_{(\alpha,\beta)}(X_{Sport})\right\}= \\ \left\{\dfrac{|X_{Sport} \cap C_3|}{|C_3|},\dfrac{|X_{Sport} \cap C_4|} 
 {|C_4|},  \dfrac{|X_{Sport} \cap C_5|}{|C_5|}\right\}=\{0.4, 0.6, 0.8\};
 \end{multline*}
\begin{multline*}
\left\{\dfrac{|X_{Sport} \cap [x]_\mathcal{R}|}{|[x]_\mathcal{R}|} \ | \ x \in NEG^{Ev}_{(\alpha,\beta)}(X_{Sport})\right\}=\left\{\dfrac{|X_{Sport} \cap C_1|}{|C_1|}\right\}=\{0\};
\end{multline*}
\begin{multline*}
\left\{\dfrac{|X_{Sport} \cap [x]_{\mathcal{R}}|}{|[x]_\mathcal{R}|} \ | \ x \in BND^{Ev}_{(\alpha,\beta)}(X_{Sport})\right\}=\left\{\dfrac{|X_{Sport} \cap C_2|}{|C_2|},\dfrac{|X_{Sport} \cap C_6|}{|C_6|}\right\} \\ =  \{0.2, 0.14\}.
\end{multline*}
Finally, by Definition \ref{def:alpha},  $\alpha_1^{\neg Sm}  =\max\{0.14, 0.2\}=0.2$, $\alpha_2^{\neg Sm}=\min\{0.4,0.6,0.8\}=0.4$, $\beta_1^{\neg Sm}=\max\{0\}=0$, and $\beta_2^{\neg Sm}=\min\{0.14,0.2\} \\ =0.14$.
\end{example}
If $Ev$ is an increasing function, namely $Ev \in \mathcal{E}^+$, then we can order $\beta_1^{Ev}$, $\beta_2^{Ev}$, $\alpha_1^{Ev}$, and $\alpha_2^{Ev}$ as shown in the following proposition. 
\begin{proposition} \label{pro:1}
Let $Ev \in \mathcal{E}^+$. If $POS_{(\alpha,\beta)}^{Ev}(X), NEG_{(\alpha,\beta)}^{Ev}(X)$, $BND_{(\alpha,\beta)}^{Ev}(X)$ $\neq \emptyset$, then  $0 \leq \beta_1^{Ev} < \beta_2^{Ev} \leq \alpha_1^{Ev} < \alpha_2^{Ev} \leq 1$.
\end{proposition}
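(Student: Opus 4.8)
The plan is to verify the chain of inequalities piece by piece, isolating the two places where the monotonicity of $Ev$ is actually used. Throughout, write $r_x = \frac{|[x]_{\mathcal{R}} \cap X|}{|[x]_{\mathcal{R}}|}$ for $x \in U$; since $[x]_{\mathcal{R}} \cap X \subseteq [x]_{\mathcal{R}}$ and $[x]_{\mathcal{R}} \neq \emptyset$ by reflexivity of $\mathcal{R}$, we have $r_x \in [0,1]$. Because $U$ is finite and all three regions of $\mathcal{T}^{Ev}_{(\alpha,\beta)}(X)$ are non-empty by hypothesis, each of $\beta_1^{Ev}, \beta_2^{Ev}, \alpha_1^{Ev}, \alpha_2^{Ev}$ is a genuine maximum or minimum, attained at some point of the corresponding region.

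First I would dispose of the two outer bounds: $\beta_1^{Ev} \geq 0$ and $\alpha_2^{Ev} \leq 1$ follow at once from $r_x \in [0,1]$. Next, the middle inequality $\beta_2^{Ev} \leq \alpha_1^{Ev}$ is immediate, since $\beta_2^{Ev}$ is the minimum and $\alpha_1^{Ev}$ the maximum of the \emph{same} non-empty finite set $\{\, r_x \mid x \in BND^{Ev}_{(\alpha,\beta)}(X)\,\}$.

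The two strict inequalities $\beta_1^{Ev} < \beta_2^{Ev}$ and $\alpha_1^{Ev} < \alpha_2^{Ev}$ are where $Ev \in \mathcal{E}^+$ enters, and both are argued by contradiction in the same way. For the first: choose $x \in NEG^{Ev}_{(\alpha,\beta)}(X)$ with $r_x = \beta_1^{Ev}$ and $y \in BND^{Ev}_{(\alpha,\beta)}(X)$ with $r_y = \beta_2^{Ev}$; by Definition \ref{def:ee}, $Ev(\beta_1^{Ev}) \leq \beta$ while $Ev(\beta_2^{Ev}) > \beta$. If $\beta_1^{Ev} \geq \beta_2^{Ev}$, monotonicity of $Ev$ would give $\beta \geq Ev(\beta_1^{Ev}) \geq Ev(\beta_2^{Ev}) > \beta$, a contradiction; hence $\beta_1^{Ev} < \beta_2^{Ev}$. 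The inequality $\alpha_1^{Ev} < \alpha_2^{Ev}$ is obtained symmetrically: pick $x \in BND^{Ev}_{(\alpha,\beta)}(X)$ realizing $\alpha_1^{Ev}$ and $y \in POS^{Ev}_{(\alpha,\beta)}(X)$ realizing $\alpha_2^{Ev}$, so that $Ev(\alpha_1^{Ev}) < \alpha \leq Ev(\alpha_2^{Ev})$; if $\alpha_1^{Ev} \geq \alpha_2^{Ev}$ then $\alpha > Ev(\alpha_1^{Ev}) \geq Ev(\alpha_2^{Ev}) \geq \alpha$, again impossible. Concatenating the five facts yields $0 \leq \beta_1^{Ev} < \beta_2^{Ev} \leq \alpha_1^{Ev} < \alpha_2^{Ev} \leq 1$.

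There is no serious obstacle here; the only points requiring care are (i) that the extrema are attained — which is precisely why finiteness of $U$ and the non-emptiness assumption on the three regions are invoked — and (ii) correctly matching the strict versus non-strict inequalities in the definitions of $POS^{Ev}$, $NEG^{Ev}$, $BND^{Ev}$ to the weak/strict comparisons with $\alpha$ and $\beta$, so that the monotonicity step produces a genuine contradiction of the form $\beta > \beta$ (resp. $\alpha > \alpha$) rather than a harmless $\beta \geq \beta$. It is worth noting in passing that the strictness of $\beta_1^{Ev} < \beta_2^{Ev}$ and $\alpha_1^{Ev} < \alpha_2^{Ev}$ only uses that $Ev$ is non-decreasing, not strictly increasing: the strictness is inherited entirely from the strict defining inequality of $BND^{Ev}_{(\alpha,\beta)}(X)$.
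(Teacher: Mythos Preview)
Your proof is correct and follows essentially the same approach as the paper: the outer bounds and the middle inequality $\beta_2^{Ev}\le\alpha_1^{Ev}$ are handled identically, and for the two strict inequalities the paper likewise picks witnesses in the relevant regions, obtains $Ev(\beta_1^{Ev})<Ev(\beta_2^{Ev})$ (resp.\ $Ev(\alpha_1^{Ev})<Ev(\alpha_2^{Ev})$) from Definition~\ref{def:ee}, and then uses monotonicity of $Ev$---your contradiction argument is simply the contrapositive of that step.
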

\begin{proof}
By Definition \ref{def:alpha}, it is trivial that
$0 \leq \beta_1^{Ev}, \beta^{Ev}_2, \alpha^{Ev}_1, \alpha^{Ev}_2 \leq 1$.
\begin{description}
\item [$(\beta_1^{Ev} < \beta_2^{Ev})$.] 
By Definition \ref{def:alpha} ((iii) and (iv)), $\beta_1^{Ev}=\dfrac{|X \cap [x_1]_{\mathcal{R}}|}{|[x_1]_{\mathcal{R}}|}$ with $x_1 \in NEG_{(\alpha,\beta)}^{Ev}(X)$ and $\beta^{Ev}_2=\dfrac{|X \cap [x_2]_{\mathcal{R}}|}{|[x_2]_{\mathcal{R}}|}$ with $x_2 \in BND_{(\alpha,\beta)}^{Ev}(X)$.  Then, by Definition \ref{def:ee} ((ii) and (iii)),  $Ev(\beta^{Ev}_1) \leq \beta$ and $\beta < Ev(\beta_{2}^{Ev}) < \alpha$. Hence, $Ev(\beta^{Ev}_1) < Ev(\beta^{Ev}_2)$. Thus, considering that $Ev$ is an increasing function, we can conclude that $\beta^{Ev}_1 < \beta^{Ev}_2$.
\item [($\alpha^{Ev}_1 < \alpha^{Ev}_2$).] 
By Definition \ref{def:alpha} ((i) and (ii)), $\alpha_1^{Ev}=\dfrac{|X \cap [x_1]_{\mathcal{R}}|}{|[x_1]_{\mathcal{R}}|}$ with $x_1 \in BND_{(\alpha,\beta)}^{Ev}(X)$ and $\alpha^{Ev}_2=\dfrac{|X \cap [x_2]_{\mathcal{R}}|}{|[x_2]_{\mathcal{R}}|}$ with $x_2 \in POS_{(\alpha,\beta)}^{Ev}(X)$. Thus, by Definition \ref{def:ee} ((iii) and (i)), $\beta < Ev(\alpha^{Ev}_1) < \alpha$ and $Ev(\alpha^{Ev}_2) \geq \alpha$. Thus,  $Ev(\alpha^{Ev}_1) < Ev(\alpha^{Ev}_2)$. Hence, considering that   $Ev$ is an increasing function, $\alpha^{Ev}_1 < \alpha^{Ev}_2$. 
\item [($\beta^{Ev}_2 \leq \alpha^{Ev}_1$).] By Definition \ref{def:alpha} ((i) and (iii)),  $\alpha_1^{Ev}=\dfrac{|X \cap [x_1]_{\mathcal{R}}|}{|[x_1]_{\mathcal{R}}|}$ with $x_1 \in BND_{(\alpha,\beta)}^{Ev}(X)$ and $\beta_2^{Ev}=\dfrac{|X \cap [x_2]_{\mathcal{R}}|}{|[x_2]_{\mathcal{R}}|}$ with $x_2 \in BND_{(\alpha,\beta)}^{Ev}(X)$. Therefore, since $x_1, x_2 \in BND_{(\alpha,\beta)}^{Ev}(X)$, $\beta^{Ev}_2 \leq \alpha^{Ev}_1$ clearly holds.
\end{description}
\end{proof}
\begin{example}
 In Example \ref{esempio2}, we have shown that $\alpha_1^{\neg Sm}=0.2$, $\alpha_2^{\neg Sm}=0.4$, $\beta_1^{\neg Sm}=0$, and $\beta_2^{\neg Sm} =0.14$. Then, according to Proposition \ref{pro:1},  $0 \leq \beta_1^{Ev} < \beta_2^{Ev} \leq \alpha_1^{Ev} < \alpha_2^{Ev} \leq 1$.
\end{example}
The next theorems show that the three regions generated by  $Ev \in \mathcal{E}^+$ can be also obtained by using the probabilistic approach and changing the initial thresholds. We separately analyze the following cases: all three regions are non-empty (Theorem \ref{teo:connection}) and only one of the three regions is empty (Theorems \ref{teo:2}-\ref{teo:4}). The remaining case where only one region is non-empty (namely, one of the three regions coincides with the universe) is omitted because not significant. 
\begin{theorem} \label{teo:connection}
Let $Ev \in \mathcal{E}^+$ such that $POS_{(\alpha,\beta)}^{Ev}(X), NEG_{(\alpha,\beta)}^{Ev}(X)$, $BND_{(\alpha,\beta)}^{Ev}(X)$ $\neq \emptyset$ and let $\alpha', \beta' \in [0,1]$ with $\beta'< \alpha'$. Then, 
\begin{center}
$\mathcal{T}_{(\alpha,\beta)}^{Ev}(X)=\mathcal{T}_{(\alpha',\beta')}(X)$
if and only if $\alpha' \in (\alpha^{Ev}_1,\alpha^{Ev}_2]$ and $\beta' \in [\beta^{Ev}_1,\beta^{Ev}_2)$.
\end{center}
\end{theorem}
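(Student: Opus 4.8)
The plan is to reduce everything to comparing the value $p(x):=\dfrac{|[x]_{\mathcal{R}}\cap X|}{|[x]_{\mathcal{R}}|}$ against thresholds. Since $\mathcal{T}^{Ev}_{(\alpha,\beta)}(X)$ and $\mathcal{T}_{(\alpha',\beta')}(X)$ are both tri-partitions of $U$ (Remark \ref{rem:tri} and its probabilistic analogue), once one knows the three inclusions $POS^{Ev}_{(\alpha,\beta)}(X)\subseteq POS_{(\alpha',\beta')}(X)$, $NEG^{Ev}_{(\alpha,\beta)}(X)\subseteq NEG_{(\alpha',\beta')}(X)$, $BND^{Ev}_{(\alpha,\beta)}(X)\subseteq BND_{(\alpha',\beta')}(X)$, equality of the tri-partitions follows automatically: three subsets covering $U$ and contained respectively in three pairwise disjoint sets must coincide with them. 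The key preliminary step I would establish is a ``normal form'': because $Ev$ is increasing and the extrema in Definition \ref{def:alpha} are attained, $POS^{Ev}_{(\alpha,\beta)}(X)=\{x\in U\mid p(x)\geq\alpha^{Ev}_2\}$ and $NEG^{Ev}_{(\alpha,\beta)}(X)=\{x\in U\mid p(x)\leq\beta^{Ev}_1\}$, whence $BND^{Ev}_{(\alpha,\beta)}(X)=\{x\in U\mid\beta^{Ev}_1<p(x)<\alpha^{Ev}_2\}$ by complementation (using $\beta^{Ev}_1<\alpha^{Ev}_2$ from Proposition \ref{pro:1}). Here ``$\subseteq$'' in the first equality is immediate from the definition of $\alpha^{Ev}_2$ as a minimum, and ``$\supseteq$'' follows by applying monotonicity of $Ev$ to an element $x_0$ with $p(x_0)=\alpha^{Ev}_2$; the negative case is symmetric.

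For the implication ``$\Leftarrow$'', I would assume $\alpha'\in(\alpha^{Ev}_1,\alpha^{Ev}_2]$ and $\beta'\in[\beta^{Ev}_1,\beta^{Ev}_2)$ and verify the three inclusions directly: if $x\in POS^{Ev}_{(\alpha,\beta)}(X)$ then $p(x)\geq\alpha^{Ev}_2\geq\alpha'$; if $x\in NEG^{Ev}_{(\alpha,\beta)}(X)$ then $p(x)\leq\beta^{Ev}_1\leq\beta'$; and if $x\in BND^{Ev}_{(\alpha,\beta)}(X)$ then, by Definition \ref{def:alpha}, $\beta'<\beta^{Ev}_2\leq p(x)\leq\alpha^{Ev}_1<\alpha'$. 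These place $x$ respectively in $POS_{(\alpha',\beta')}(X)$, $NEG_{(\alpha',\beta')}(X)$, $BND_{(\alpha',\beta')}(X)$, and by the covering/disjointness principle noted above the inclusions are in fact equalities, so $\mathcal{T}^{Ev}_{(\alpha,\beta)}(X)=\mathcal{T}_{(\alpha',\beta')}(X)$.

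For ``$\Rightarrow$'', suppose $\mathcal{T}^{Ev}_{(\alpha,\beta)}(X)=\mathcal{T}_{(\alpha',\beta')}(X)$. The three linguistic regions are nonempty and pairwise disjoint, hence distinct, so this equality of three-element families forces $POS_{(\alpha',\beta')}(X)$, $NEG_{(\alpha',\beta')}(X)$, $BND_{(\alpha',\beta')}(X)$ to be nonempty as well (otherwise $\emptyset$ would be a member of $\mathcal{T}_{(\alpha',\beta')}(X)$ but not of $\mathcal{T}^{Ev}_{(\alpha,\beta)}(X)$), and pairwise disjoint by Definition \ref{def:PRS}. Let $x^{*}$ attain $\max_{x\in U}p(x)$. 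By the normal form, $POS^{Ev}_{(\alpha,\beta)}(X)\neq\emptyset$ gives $x^{*}\in POS^{Ev}_{(\alpha,\beta)}(X)$; and $POS_{(\alpha',\beta')}(X)\neq\emptyset$ means some $p$-value is $\geq\alpha'$, so $p(x^{*})\geq\alpha'$, i.e.\ $x^{*}\in POS_{(\alpha',\beta')}(X)$. As $POS^{Ev}_{(\alpha,\beta)}(X)$ is one of the three pairwise disjoint probabilistic regions and shares the point $x^{*}$ with $POS_{(\alpha',\beta')}(X)$, we get $POS^{Ev}_{(\alpha,\beta)}(X)=POS_{(\alpha',\beta')}(X)$; the symmetric argument at a minimizer of $p$ gives $NEG^{Ev}_{(\alpha,\beta)}(X)=NEG_{(\alpha',\beta')}(X)$, and then $BND^{Ev}_{(\alpha,\beta)}(X)=BND_{(\alpha',\beta')}(X)$ by complementation. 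It then remains to read off the threshold bounds: an $x$ with $p(x)=\alpha^{Ev}_2$ lies in $POS^{Ev}_{(\alpha,\beta)}(X)=POS_{(\alpha',\beta')}(X)$, giving $\alpha^{Ev}_2\geq\alpha'$, while an $x$ with $p(x)=\alpha^{Ev}_1$ lies in $BND^{Ev}_{(\alpha,\beta)}(X)$, hence not in $POS_{(\alpha',\beta')}(X)$, giving $\alpha^{Ev}_1<\alpha'$; likewise, using $x$'s attaining $\beta^{Ev}_1$ in $NEG^{Ev}_{(\alpha,\beta)}(X)$ and $\beta^{Ev}_2$ in $BND^{Ev}_{(\alpha,\beta)}(X)$ yields $\beta^{Ev}_1\leq\beta'<\beta^{Ev}_2$, which is exactly the claimed membership.

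The step I expect to be the real content is the normal-form claim — that the linguistic positive and negative regions are genuine $p$-level sets — since this is where the hypothesis $Ev\in\mathcal{E}^{+}$ is used essentially (a non-monotone $Ev$ would break it, and with it the theorem). Two further points need care but are routine: interpreting $\mathcal{T}^{Ev}_{(\alpha,\beta)}(X)=\mathcal{T}_{(\alpha',\beta')}(X)$ as an equality of unordered three-element families and arguing that the natural components must match (handled above via the extremal elements $x^{*}$ and its minimizing counterpart), and exploiting the \emph{strict} inequalities $\beta^{Ev}_1<\beta^{Ev}_2$ and $\alpha^{Ev}_1<\alpha^{Ev}_2$ of Proposition \ref{pro:1}, which make the intervals $(\alpha^{Ev}_1,\alpha^{Ev}_2]$ and $[\beta^{Ev}_1,\beta^{Ev}_2)$ nonempty and keep all the comparisons consistent.
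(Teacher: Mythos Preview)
Your proof is correct and, in essence, parallels the paper's argument: both directions ultimately reduce to comparing the finitely many values $p(x)$ with the extrema $\alpha^{Ev}_1,\alpha^{Ev}_2,\beta^{Ev}_1,\beta^{Ev}_2$ of Definition~\ref{def:alpha}, and both rely on Proposition~\ref{pro:1} for the needed strict inequalities. The underlying ideas coincide.

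Where you diverge from the paper is in packaging and in one point of rigour. First, you isolate a ``normal form'' lemma, $POS^{Ev}_{(\alpha,\beta)}(X)=\{x:p(x)\ge\alpha^{Ev}_2\}$ and $NEG^{Ev}_{(\alpha,\beta)}(X)=\{x:p(x)\le\beta^{Ev}_1\}$, which the paper never states but effectively re-derives piecemeal inside its case analysis; your formulation makes the role of the monotonicity hypothesis $Ev\in\mathcal{E}^{+}$ transparent. Second, for $(\Leftarrow)$ you check only one inclusion per region and close via the disjoint-cover principle, whereas the paper verifies both inclusions for $POS$ and $BND$ explicitly and deduces the $NEG$ equality last; your route is shorter. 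Third, and most notably, in $(\Rightarrow)$ you take seriously that $\mathcal{T}^{Ev}_{(\alpha,\beta)}(X)=\mathcal{T}_{(\alpha',\beta')}(X)$ is an equality of \emph{unordered} three-element sets and argue (via an extremal element $x^{*}$) that the positive regions must be the \emph{same} member of that set; the paper simply writes ``this contradicts $POS^{Ev}_{(\alpha,\beta)}(X)=POS_{(\alpha',\beta')}(X)$'' without justifying that the labels match. Your argument fills a small gap the paper leaves implicit. The trade-off is that the paper's proof is more self-contained and linear, while yours is more conceptual and reusable (the normal-form step would streamline Theorems~\ref{teo:2}--\ref{teo:4} as well).
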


\begin{remark}
Before to prove Theorem \ref{teo:connection}, let us represent the intervals that contain $\alpha'$ and $\beta'$ (i.e. the values for generating $\mathcal{T}_{(\alpha,\beta)}^{Ev}(X)$ with probabilistic rough sets) by Figure \ref{fig:thresholds}. 
By Definition \ref{def:alpha}, these intervals separates  $POS_{(\alpha,\beta)}^{Ev}(X)$ from $BND_{(\alpha,\beta)}^{Ev}(X)$ and $NEG_{(\alpha,\beta)}^{Ev}(X)$ from $BND_{(\alpha,\beta)}^{Ev}(X)$. More precisely, the value $Ev\left(\dfrac{|X \cap [x]_{\mathcal{R}}|}{|[x]_{\mathcal{R}}|}\right)$ belongs to $[0,\beta^{Ev}_1]$ when $x \in NEG_{(\alpha,\beta)}^{Ev}(X)$, to $[\beta^{Ev}_2, \alpha^{Ev}_1]$ when $x \in BND^{Ev}_{(\alpha,\beta)}(X)$, and to $[\alpha^{Ev}_2,1]$ when $x \in POS_{(\alpha,\beta)}^{Ev}(X)$.
   \begin{figure}[h]
	\begin{center}
		\[\scalebox{1}{\begin{tikzpicture}
\node (1) at (0,0){};
\node (2) at (7,0){};
\node (11) at (0.1,0){$\bullet$};
\node (30) at (0,0.5){$0$};
\node (4) at (1.3,0){$[$};
\node (40) at (1.3,0.5){$\beta_1^{Ev}$};
\node (6) at (2.7,0){$)$};
\node (60) at (2.7,0.5){$\beta^{Ev}_2$};
\node (7) at (4.3,0){$($};
\node (70) at (4.3,0.5){$\alpha^{Ev}_1$};
\node (9) at (5.7,0){$]$};
\node (90) at (5.7,0.5){$\alpha^{Ev}_2$};
\node (10) at (6.9,0){$\bullet$};
\node (100) at (6.9,0.5){$1$};
\node (12) at (1.2,0){};
\node (13) at (2.8,0){};
\node (14) at (4.2,0){};
\node (15) at (5.8,0){};
\draw[-] (1) to node [midway]{}(2);
\draw[-, very thick] (12) to node [midway]{}(13);
\draw[-, very thick] (14) to node [midway]{}(15);
			\end{tikzpicture}}\]
	\end{center}
	\caption{Intervals $[\beta_1^{Ev},\beta_2^{Ev})$  and $(\alpha_1^{Ev},\alpha_2^{Ev}]$.} \label{fig:thresholds}
\end{figure}
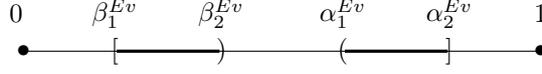
\end{remark}

\begin{proof}
$(\Leftarrow).$
Let $\alpha' \in (\alpha^{Ev}_1,\alpha^{Ev}_2]$ and let $\beta' \in [\beta^{Ev}_1,\beta^{Ev}_2)$, we need to prove that $POS_{(\alpha,\beta)}^{Ev}(X)=POS_{(\alpha',\beta')}(X)$, $NEG_{(\alpha,\beta)}^{Ev}(X)=NEG_{(\alpha',\beta')}(X)$, and $BND_{(\alpha,\beta)}^{Ev}$ $(X)=BND_{(\alpha',\beta')}(X)$. 

\begin{description}
\item [($POS_{(\alpha,\beta)}^{Ev}(X)=POS_{(\alpha',\beta')}(X)$).] Let $\bar{x} \in POS^{Ev}_{(\alpha, \beta)}(X)$, then $\dfrac{|X \cap [\bar{x}]_\mathcal{R}|}{|[\bar{x}]_\mathcal{R}|} \geq \alpha^{Ev}_2$ from Definition \ref{def:alpha} (ii). Moreover, $\alpha' \leq \alpha^{Ev}_2$ because $\alpha' \in (\alpha^{Ev}_1,\alpha^{Ev}_2]$. Consequently, $\dfrac{|X \cap [\bar{x}]_\mathcal{R}|}{|[\bar{x}]_\mathcal{R}|} \geq \alpha'$. Finally, $\bar{x} \in POS_{(\alpha',\beta')}(X)$ from Definition \ref{def:PRS} (i). 

Let $\bar{x} \in POS_{(\alpha',\beta')}(X)$, then $\dfrac{|X \cap [\bar{x}]_\mathcal{R}|}{|[\bar{x}]_\mathcal{R}|} \geq \alpha'$ from Definition \ref{def:PRS} (i). By the previous inequality and $\alpha' > \alpha^{Ev}_1$, we get  $\dfrac{|X \cap [\bar{x}]_\mathcal{R}|}{|[\bar{x}]_\mathcal{R}|} > \alpha^{Ev}_1$. Hence, considering that $\alpha^{Ev}_1$ is the maximum of $\left\{\dfrac{|X \cap [x]_{\mathcal{R}}|}{|[x]_{\mathcal{R}}|} \ | \ x \in BND_{(\alpha,\beta)}^{Ev}(X)\right\}$ (see Definition \ref{def:alpha}(i)), we are sure that $\bar{x} \notin BND_{(\alpha,\beta)}^{Ev}(X)$. Furthermore, $\dfrac{|X \cap [\bar{x}]_\mathcal{R}|}{|[\bar{x}]_\mathcal{R}|} > \alpha^{Ev}_1$ and $\beta_1^{Ev}< \alpha^{Ev}_1$ (see Proposition \ref{pro:1}) imply that  $\dfrac{|X \cap [\bar{x}]_\mathcal{R}|}{|[\bar{x}]_\mathcal{R}|} > \beta^{Ev}_1$. Thus, considering that $\beta^{Ev}_1$ is the maximum of $\left\{\dfrac{|X \cap [x]_{\mathcal{R}}|}{|[x]_{\mathcal{R}}|} \ | \ x \in NEG_{(\alpha,\beta)}^{Ev}(X)\right\}$ (see Definition \ref{def:alpha}(iii)), we have $\bar{x} \notin NEG_{(\alpha,\beta)}^{Ev}(X)$. Ultimately, by \eqref{eq:tr}, $\bar{x} \in POS_{(\alpha,\beta)}^{Ev}(X)$. 

\item [($BND_{(\alpha,\beta)}^{Ev}(X)=BND_{(\alpha',\beta')}(X)$).] Let $\bar{x} \in BND_{(\alpha,\beta)}^{Ev}(X)$. By Definition \ref{def:alpha} ((i) and (iv)),  $\beta_2^{Ev} \leq \dfrac{|X \cap [\bar{x}]_{\mathcal{R}}|}{|[\bar{x}]_{\mathcal{R}}|} \leq \alpha_1^{Ev}$. Moreover, by hypothesis, $\beta' < \beta_2^{Ev}$  and $\alpha' > \alpha_1^{Ev}$. Thus, we can conclude that $\beta' < \dfrac{|X \cap [\bar{x}]_{\mathcal{R}}|}{|[\bar{x}]_{\mathcal{R}}|} < \alpha'$, namely $\bar{x} \in BND_{(\alpha',\beta')}(X)$ from Definition \ref{def:PRS} (iii). 

Let $\bar{x} \in BND_{(\alpha',\beta')}(X)$, then $\beta' < \dfrac{|X \cap [\bar{x}]_{\mathcal{R}}|}{|[\bar{x}]_{\mathcal{R}}|} < \alpha'$ from Definition \ref{def:PRS} (iii). By hypothesis,  $\beta^{Ev}_1 \leq \beta'$ and $\alpha' \leq \alpha^{Ev}_2$. Hence, we know that $\beta^{Ev}_1 < \dfrac{|X \cap [\bar{x}]_{\mathcal{R}}|}{|[\bar{x}]_{\mathcal{R}}|} < \alpha^{Ev}_2$. By Definition \ref{def:alpha} (iii),  $\beta^{Ev}_1 < \dfrac{|X \cap [\bar{x}]_{\mathcal{R}}|}{|[\bar{x}]_{\mathcal{R}}|}$ implies that $\bar{x} \notin NEG_{(\alpha,\beta)}^{Ev}(X)$. Furthermore, by Definition \ref{def:alpha} (ii), $\dfrac{|X \cap [\bar{x}]_{\mathcal{R}}|}{|[\bar{x}]_{\mathcal{R}}|} < \alpha^{Ev}_2$ implies that $\bar{x} \notin POS^{Ev}_{(\alpha,\beta)}(X)$. Ultimately, by \eqref{eq:tr}, $\bar{x} \in BND^{Ev}_{(\alpha,\beta)}(X)$.

\item [($NEG_{(\alpha,\beta)}^{Ev}(X)=NEG_{(\alpha',\beta')}(X)$).] We have previously shown that $POS_{(\alpha,\beta)}^{Ev}(X)$ $=POS_{(\alpha',\beta')}(X)$ and $BND_{(\alpha,\beta)}^{Ev}(X)=BND_{(\alpha',\beta')}(X)$. So, by \eqref{eq:tr1} and \eqref{eq:tr}, it is clear that  $NEG_{(\alpha,\beta)}^{Ev}(X)=NEG_{(\alpha',\beta')}(X)$.

\end{description}

$(\Rightarrow).$ Let $\mathcal{T}_{(\alpha,\beta)}^{Ev}(X)=\mathcal{T}_{(\alpha',\beta')}(X)$, we intend to prove that $\beta^{Ev}_1 \leq \beta' < \beta^{Ev}_2$ and $\alpha^{Ev}_1 < \alpha' \leq \alpha^{Ev}_2$. 
\begin{description}
\item [($\alpha' \leq \alpha^{Ev}_2$).]  
Let $x_2 \in U$ such that $\alpha^{Ev}_2 = \dfrac{|X \cap [x_2]_{\mathcal{R}}|}{|[x_2]_{\mathcal{R}}|}$. By Definition \ref{def:alpha} (ii), $x_2 \in POS_{(\alpha,\beta)}^{Ev}(X)$. Hence,  $\alpha' > \alpha^{Ev}_2$ means that $\dfrac{|X \cap [x_2]_{\mathcal{R}}|}{|[x_2]_{\mathcal{R}}|}<\alpha'$. Thus, $x_2 \notin POS_{(\alpha',\beta')}(X)$ from Definition \ref{def:PRS} (i). This contradicts that $POS^{Ev}_{(\alpha,\beta)}(X)=POS_{(\alpha',\beta')}(X)$. Thus, it must be true that $\alpha' \leq \alpha^{Ev}_2$.
\item [($\alpha^{Ev}_1 < \alpha'$).] Let $x_1 \in U$ such that $\alpha^{Ev}_1=\dfrac{|X \cap [x_1]_{\mathcal{R}}|}{|[x_1]_{\mathcal{R}}|}$. By Definition \ref{def:alpha} (i), $x_1 \in BND^{Ev}_{(\alpha,\beta)}(X)$. If $\alpha^{Ev}_1 \geq \alpha'$, then $\dfrac{|X \cap [x_1]_{\mathcal{R}}|}{|[x_1]_{\mathcal{R}}|} \geq \alpha'$. So, $x_1 \in POS_{(\alpha',\beta')}(X)$ from Definition \ref{def:PRS} (i). This contradicts that $POS^{Ev}_{(\alpha,\beta)}(X)=POS_{(\alpha',\beta')}(X)$. Thus, it must be true that $\alpha^{Ev}_1 < \alpha'$.
\item [$(\beta_1^{Ev} \leq \beta')$.] Let $x_1 \in U$ such that $\beta_1^{Ev}=\dfrac{|X \cap [x_1]_{\mathcal{R}}|}{|[x_1]_{\mathcal{R}}|}$. By Definition \ref{def:alpha}(iii), $x_1 \in NEG^{Ev}_{(\alpha,\beta)}(X)$. If $\beta_1^{Ev} > \beta'$, then $\dfrac{|X \cap [x_1]_{\mathcal{R}}|}{|[x_1]_{\mathcal{R}}|} > \beta'$, which implies that $x_1 \notin NEG_{(\alpha',\beta')}(X)$  from Definition \ref{def:PRS}(ii). This contradicts that $NEG^{Ev}_{(\alpha,\beta)}(X)=NEG_{(\alpha',\beta')}(X)$. Thus, it must be true that $\beta_1^{Ev} \leq \beta'$.
\item [$(\beta' < \beta^{Ev}_2)$.] Let $x_2 \in U$ such that $\beta^{Ev}_2=\dfrac{|X \cap [x_2]_{\mathcal{R}}|}{|[x_2]_{\mathcal{R}}|}$. By Definition \ref{def:alpha}(iv), $x_2 \in BND^{Ev}_{(\alpha,\beta)}(X)$. Also, if $\beta' \geq \beta^{Ev}_2$, then $\dfrac{|X \cap [x_2]_{\mathcal{R}}|}{|[x_2]_{\mathcal{R}}|} \leq \beta'$, which implies that $x_2 \in NEG_{(\alpha',\beta')}(X)$ from Definition \ref{def:PRS} (ii). This contradicts that $BND^{Ev}_{(\alpha,\beta)}(X)=BND_{(\alpha',\beta')}(X)$. Thus, it must be true that $\beta' < \beta^{Ev}_2$.

\end{description}

\end{proof}
\begin{example}
Consider Example \ref{esempio}, $\neg Sm$ is an increasing function and all the three regions of $\mathcal{T}_{(0.8,0.2)}^{\neg Sm}(X_{Sport})$ are non-empty. In Example \ref{esempio2}, we have found that $\alpha_1^{\neg Sm}=0.2$, $\alpha_2^{\neg Sm}=0.4$, $\beta_1^{\neg Sm}=0$, and $\beta_2^{\neg Sm} =0.14$. Therefore, according to Theorem \ref{teo:connection}, we get
$\mathcal{T}^{\neg Sm}_{(0.8,0.2)}(X_{Sport})=\mathcal{T}_{(\alpha',\beta')}(X_{Sport})$  for each $(\alpha',\beta')$ such that $\alpha' \in (0.2,0.4]$ and $\beta' \in [0,0.14)$

For example, we can easily verify that $\mathcal{T}^{\neg Sm}_{(0.8,0.2)}(X_{Sport})=\mathcal{T}_{(0.3,0.1)}(X_{Sport})$. Indeed, by \eqref{eq:ex} and by Definition \ref{def:PRS}, 
\begin{itemize}
\item $POS_{(0.3,0.1)}(X_{Sport})=\left\{x \in U \ | \ \dfrac{|X \cap [x]_{\mathcal{R}}|}{|[x]_{\mathcal{R}}|} \geq 0.3\right\}=C_3 \cup C_4 \cup C_5$,  
\item $NEG_{(0.3,0.1)}(X_{Sport})=\left\{x \in U \ | \ \dfrac{|X \cap [x]_{\mathcal{R}}|}{|[x]_{\mathcal{R}}|} \leq 0.1\right\}=C_1$, and 
\item $BND_{(0.3,0.1)}(X_{Sport})=\left\{x \in U \ | \ 0.1 < \dfrac{|X \cap [x]_{\mathcal{R}}|}{|[x]_{\mathcal{R}}|} < 0.3\right\}=C_2 \cup C_6$. 
\end{itemize} 
\end{example}

By Theorem \ref{teo:connection}, we can connect linguistic rough sets with classical rough sets. 
More precisely, the following corollary holds. 

\begin{corollary}
Let $Ev \in \mathcal{E}^+$ with $POS_{(\alpha,\beta)}^{Ev}(X), NEG_{(\alpha,\beta)}^{Ev}(X)$, $BND_{(\alpha,\beta)}^{Ev}(X)$ $\neq \emptyset$. Then,
\begin{center}
$(\mathcal{L}_{(\alpha,\beta)}^{Ev}(X), \mathcal{U}_{(\alpha,\beta)}^{Ev}(X))=(\mathcal{L}(X), \mathcal{U}(X))$ \footnote{Recall that $(\mathcal{L}_{(\alpha,\beta)}^{Ev}(X), \mathcal{U}_{(\alpha,\beta)}^{Ev}(X))$ is the linguistic rough set of $X$ given by Definition \ref{def:linguisticRS}  and $(\mathcal{L}(X),\mathcal{U}(X))$ is the  classical rough set of $X$ given by Eq. \eqref{eq:RS}.} if and only if $\beta^{Ev}_1=0$ and $\alpha^{Ev}_2=1$.
\end{center}
\end{corollary}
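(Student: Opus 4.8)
The plan is to reduce the asserted equality of rough sets to an equality of tri-partitions and then instantiate Theorem \ref{teo:connection} at the special pair $(\alpha',\beta')=(1,0)$. First recall from Remark \ref{rem:1} that the classical rough set $(\mathcal{L}(X),\mathcal{U}(X))$ is exactly the probabilistic rough set $(\mathcal{L}_{(1,0)}(X),\mathcal{U}_{(1,0)}(X))$, so $\mathcal{L}(X)=POS_{(1,0)}(X)$ and $\mathcal{U}(X)=POS_{(1,0)}(X)\cup BND_{(1,0)}(X)$; likewise, by Definition \ref{def:linguisticRS}, $\mathcal{L}^{Ev}_{(\alpha,\beta)}(X)=POS^{Ev}_{(\alpha,\beta)}(X)$ and $\mathcal{U}^{Ev}_{(\alpha,\beta)}(X)=POS^{Ev}_{(\alpha,\beta)}(X)\cup BND^{Ev}_{(\alpha,\beta)}(X)$. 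I would then show that $(\mathcal{L}^{Ev}_{(\alpha,\beta)}(X),\mathcal{U}^{Ev}_{(\alpha,\beta)}(X))=(\mathcal{L}(X),\mathcal{U}(X))$ if and only if $\mathcal{T}^{Ev}_{(\alpha,\beta)}(X)=\mathcal{T}_{(1,0)}(X)$. One direction is immediate. For the other, equality of the lower approximations gives $POS^{Ev}_{(\alpha,\beta)}(X)=POS_{(1,0)}(X)$; removing this common set from the two (equal) upper approximations and using the mutual disjointness of the two triples (Remark \ref{rem:tri} and Definition \ref{def:PRS}) yields $BND^{Ev}_{(\alpha,\beta)}(X)=BND_{(1,0)}(X)$; finally the negative regions agree because both triples cover $U$ (Eq. \eqref{eq:tr} and Eq. \eqref{eq:tr1}).

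Next, since $Ev\in\mathcal{E}^+$ and all three regions of $\mathcal{T}^{Ev}_{(\alpha,\beta)}(X)$ are non-empty, Theorem \ref{teo:connection} applies, and taking $(\alpha',\beta')=(1,0)$ (which satisfies $\beta'<\alpha'$) it gives
\[
\mathcal{T}^{Ev}_{(\alpha,\beta)}(X)=\mathcal{T}_{(1,0)}(X)\ \iff\ 1\in(\alpha^{Ev}_1,\alpha^{Ev}_2]\ \text{ and }\ 0\in[\beta^{Ev}_1,\beta^{Ev}_2).
\]
It then remains to simplify the two membership conditions using the trivial bounds and Proposition \ref{pro:1}. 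Since $\alpha^{Ev}_2$ is a ratio $|X\cap[x]_{\mathcal{R}}|/|[x]_{\mathcal{R}}|$ it lies in $[0,1]$, so $1\le\alpha^{Ev}_2$ forces $\alpha^{Ev}_2=1$; conversely, if $\alpha^{Ev}_2=1$ then Proposition \ref{pro:1} gives $\alpha^{Ev}_1<\alpha^{Ev}_2=1$, so $1\in(\alpha^{Ev}_1,\alpha^{Ev}_2]$. Symmetrically, $\beta^{Ev}_1\ge 0$ makes $\beta^{Ev}_1\le 0$ equivalent to $\beta^{Ev}_1=0$, and $\beta^{Ev}_1=0$ together with Proposition \ref{pro:1} gives $0<\beta^{Ev}_2$, hence $0\in[\beta^{Ev}_1,\beta^{Ev}_2)$. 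Combining, $\mathcal{T}^{Ev}_{(\alpha,\beta)}(X)=\mathcal{T}_{(1,0)}(X)$ holds exactly when $\alpha^{Ev}_2=1$ and $\beta^{Ev}_1=0$, which is the claim.

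The only delicate point is the opening reduction, namely verifying that equality of the two pairs of approximations is genuinely equivalent to equality of the full tri-partitions; this is pure bookkeeping with the disjoint-cover properties, but it must be stated carefully because the rough set only records $POS$ and $POS\cup BND$, not $NEG$ directly. After that, everything is a direct instantiation of Theorem \ref{teo:connection} followed by elementary interval arithmetic with the universal bounds $0\le\beta^{Ev}_1$ and $\alpha^{Ev}_2\le 1$ and the strict inequalities of Proposition \ref{pro:1}.
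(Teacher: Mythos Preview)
Your proof is correct. The main difference from the paper's argument is in the forward direction $(\Rightarrow)$: the paper does not invoke Theorem \ref{teo:connection} there, but instead argues directly from Eq.~\eqref{eq:RS} that if $\mathcal{L}^{Ev}_{(\alpha,\beta)}(X)=\mathcal{L}(X)$ then every $x\in POS^{Ev}_{(\alpha,\beta)}(X)$ satisfies $[x]_{\mathcal{R}}\subseteq X$, whence $\dfrac{|X\cap[x]_{\mathcal{R}}|}{|[x]_{\mathcal{R}}|}=1$ for all such $x$ and $\alpha^{Ev}_2=1$; similarly for $\beta^{Ev}_1=0$. Only the backward direction $(\Leftarrow)$ in the paper appeals to Theorem \ref{teo:connection}. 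Your route is more uniform: you first reduce equality of the approximation pairs to equality of the full tri-partitions (the extra bookkeeping step you flag), and then apply Theorem \ref{teo:connection} at $(\alpha',\beta')=(1,0)$ in both directions, finishing with Proposition \ref{pro:1} and the trivial bounds to rewrite the interval conditions as $\alpha^{Ev}_2=1$ and $\beta^{Ev}_1=0$. The paper's forward direction is slightly more elementary (it avoids Theorem \ref{teo:connection}), while yours makes the corollary a pure instantiation of that theorem once the tri-partition reduction is in place.
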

\begin{proof}

$(\Rightarrow).$ Suppose that $(\mathcal{L}_{(\alpha,\beta)}^{Ev}(X),  \mathcal{U}_{(\alpha,\beta)}^{Ev}(X))$ is the rough set of $X$.  Then, by \eqref{eq:RS}, let $x \in U$,
$x \in POS_{(\alpha,\beta)}^{Ev}(X)$ 
if and only if $[x]_{\mathcal{R}} \subseteq X$. 
The latter means that $\dfrac{|X \cap [x]_{\mathcal{R}}|}{|[x]_{\mathcal{R}}|}=1$ for each $x \in POS_{(\alpha,\beta)}^{Ev}(X)$. Hence, By Definition \ref{def:alpha} (ii),
$\alpha_2^{Ev}=1$. 

By \eqref{eq:RS}, let $x \in U$,
$x \in POS_{(\alpha,\beta)}^{Ev}(X) \cup BND_{(\alpha,\beta)}^{Ev}(X)$ 
if and only if $[x]_{\mathcal{R}} \cap X \neq \emptyset$. Since $POS_{(\alpha,\beta)}^{Ev}(X) \cup BND_{(\alpha,\beta)}^{Ev}(X)=U \setminus NEG_{(\alpha,\beta)}^{Ev}(X)$, we know that  $\dfrac{|X \cap [x]_{\mathcal{R}}|}{|[x]_{\mathcal{R}}|}=0$ for each $x \in NEG_{(\alpha,\beta)}^{Ev}(X)$. Finally, by Definition \ref{def:alpha} (iii), $\beta^{Ev}_1=0$.

$(\Leftarrow)$. Suppose that $\alpha^{Ev}_2=1$ and $\beta^{Ev}_1=0$. Trivially, $\alpha^{Ev}_2 \in (\alpha^{Ev}_1, \alpha^{Ev}_2]$ and $\beta^{Ev}_1 \in [\beta^{Ev}_1, \beta^{Ev}_2)$.  Then, by Theorem \ref{teo:connection},
$(POS_{(\alpha,\beta)}^{Ev}(X),$ $POS_{(\alpha,\beta)}^{Ev}(X) \cup BND_{(\alpha,\beta)}^{Ev}(X))=(POS_{(1,0)}(X), POS_{(1,0)}(X) \cup BND_{(1,0)}(X))$. Moreover, by Remark \ref{rem:1}, $(\mathcal{L}_{(1,0)}(X),\mathcal{U}_{(1,0)}(X))=(POS_{(1,0)}(X),$ $POS_{(1,0)}(X) \cup BND_{(1,0)}(X))$ coincides with the rough set $(\mathcal{L}(X),\mathcal{U}(X))$ given by \eqref{eq:RS}.  
\end{proof}
\begin{example} \label{esempio3}
Consider the universe $U=\{u_1, \ldots, u_{20}\}$ and the evaluative expression \emph{Very big}, which is modelled by \eqref{eq:vBig}. We suppose that $U$ is partitionated into three equivalence classes: $C_1=\{u_1, \ldots, u_5\}$, $C_2=\{u_6, \ldots, u_{10}\}$, and $C_3=\{u_{11}, \ldots, u_{20}\}$. If $X=\{u_6, \ldots, u_{19}\}$, we can simply prove that $(\mathcal{L}^{BiVe}_{(0.7,0.3)}(X), \mathcal{E}^{BiVe}_{(0.7,0.3)}(X))=(\mathcal{L}(X),\mathcal{U}(X))$. Indeed, we get
\begin{equation*} 
\dfrac{|X \cap [x]_\mathcal{R}|}{|[x]_\mathcal{R}|}=\begin{cases}
0 & \mbox{ if } x \in C_1,\\
0.9 & \mbox{ if } x \in C_3, \\
1 & \mbox{ if } x \in C_2.
\end{cases} \mbox{ and } 
 Bi Ve \left(\frac{|X \cap [x]_\mathcal{R}|}{|[x]_\mathcal{R}|}\right)= \begin{cases} 0 & \mbox{ if } x \in C_1 , \\  0.59 & \mbox{ if } x \in C_3,\\ 1 & \mbox{ if } x \in C_2. 
 \end{cases}
\end{equation*}
Thus, 
\begin{description}
\item $POS_{(0.7,0.3)}^{BiVe}(X)= \{C_i   \ | \ i \in \{1,2,3\} \mbox{ and } BiVe\left(\dfrac{|X \cap C_i|}{|C_i|}\right) \geq 0.7\}=C_2$;
\item $NEG_{(0.7,0.3)}^{BiVe}(X)= \{C_i   \ | \ i \in \{1,2,3\} \mbox{ and } BiVe\left(\dfrac{|X \cap C_i|}{|C_i|}\right) \leq 0.3\}=C_1$;
\item $BND_{(0.7,0.3)}^{BiVe}(X)= \{C_i   \ | \ i \in \{1,2,3\} \mbox{ and } 0.3 < BiVe\left(\dfrac{|X \cap C_i|}{|C_i|}\right) < 0.7\}=C_3$.
\end{description}
Also, by Definition \ref{def:alpha}, $\beta_1^{BiVe}=0$, $\alpha_2^{BiVe}=1$, $\alpha_1^{BiVe}=\beta_2^{BiVe}=0.9$. Since the hypothesis of the previous corollary is satisfied, we expect that \\ $(\mathcal{L}(X),\mathcal{U}(X))=(\mathcal{L}^{BiVe}_{(0.7,0.3)}(X), \mathcal{U}^{BiVe}_{(0.7,0.3)}(X))=(C_2, C_2 \cup C_3)$. We can immediately verify that this is true: $\mathcal{L}(X)=C_2$ because $C_2$ is the unique class among $C_1, C_2$, and $C_3$ that is included in $X$; moreover, $\mathcal{U}(X)=C_2 \cup C_3$ because $X \cap C_2 \neq \emptyset$ and $X \cap C_3 \neq \emptyset$, while $X \cap C_1=\emptyset$. 
\end{example}

We are now going to deal with the cases where one of $BND^{Ev}_{(\alpha,\beta)},POS^{Ev}_{(\alpha,\beta)},$  $NEG^{Ev}_{(\alpha,\beta)}$ is empty.

\begin{theorem} \label{teo:2}
Let $Ev \in \mathcal{E}^+$ such that $BND_{(\alpha,\beta)}^{Ev} =\emptyset$ and $POS_{(\alpha,\beta)}^{Ev}, NEG_{(\alpha,\beta)}^{Ev} \neq \emptyset$. Let $\alpha', \beta' \in [0,1]$ such that $\beta' < \alpha'$. Then,
\begin{center}
$\mathcal{T}_{(\alpha,\beta)}^{Ev}(X)=\mathcal{T}_{(\alpha',\beta')}(X)$ \ if and only if \ $\beta^{Ev}_1 \leq \beta' < \alpha' \leq \alpha^{Ev}_{2}$.
\end{center}
\end{theorem}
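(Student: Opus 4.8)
The plan is to adapt the proof of Theorem~\ref{teo:connection} to the degenerate situation in which the linguistic boundary region is empty. The first thing to record is that $BND^{Ev}_{(\alpha,\beta)}(X)=\emptyset$ forces $U=POS^{Ev}_{(\alpha,\beta)}(X)\cup NEG^{Ev}_{(\alpha,\beta)}(X)$ with the two pieces disjoint (by Remark~\ref{rem:tri}), and that in this case only $\alpha^{Ev}_2$ and $\beta^{Ev}_1$ are defined; the quantities $\alpha^{Ev}_1$ and $\beta^{Ev}_2$ of Definition~\ref{def:alpha} no longer make sense, so the whole argument must be carried out with $\alpha^{Ev}_2$ and $\beta^{Ev}_1$ alone. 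It is also worth noting at the outset that $\beta^{Ev}_1<\alpha^{Ev}_2$: choosing witnesses $x_1\in NEG^{Ev}_{(\alpha,\beta)}(X)$ and $x_2\in POS^{Ev}_{(\alpha,\beta)}(X)$ realizing these two values, Definition~\ref{def:ee} gives $Ev(\beta^{Ev}_1)\le\beta<\alpha\le Ev(\alpha^{Ev}_2)$, and monotonicity of $Ev$ then yields $\beta^{Ev}_1<\alpha^{Ev}_2$, so the stated interval of admissible thresholds is non-empty.

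For the direction $(\Leftarrow)$, assume $\beta^{Ev}_1\le\beta'<\alpha'\le\alpha^{Ev}_2$. If $x\in POS^{Ev}_{(\alpha,\beta)}(X)$ then $\frac{|X\cap[x]_{\mathcal R}|}{|[x]_{\mathcal R}|}\ge\alpha^{Ev}_2\ge\alpha'$ by Definition~\ref{def:alpha}(ii), so $x\in POS_{(\alpha',\beta')}(X)$; likewise, if $x\in NEG^{Ev}_{(\alpha,\beta)}(X)$ then $\frac{|X\cap[x]_{\mathcal R}|}{|[x]_{\mathcal R}|}\le\beta^{Ev}_1\le\beta'$ by Definition~\ref{def:alpha}(iii), so $x\in NEG_{(\alpha',\beta')}(X)$. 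Hence $POS^{Ev}_{(\alpha,\beta)}(X)\subseteq POS_{(\alpha',\beta')}(X)$ and $NEG^{Ev}_{(\alpha,\beta)}(X)\subseteq NEG_{(\alpha',\beta')}(X)$. Since the left-hand sides already cover $U$ while the right-hand sides are pairwise disjoint, each inclusion must in fact be an equality, and therefore $BND_{(\alpha',\beta')}(X)=U\setminus\bigl(POS_{(\alpha',\beta')}(X)\cup NEG_{(\alpha',\beta')}(X)\bigr)=\emptyset=BND^{Ev}_{(\alpha,\beta)}(X)$, which gives $\mathcal{T}^{Ev}_{(\alpha,\beta)}(X)=\mathcal{T}_{(\alpha',\beta')}(X)$.

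For the direction $(\Rightarrow)$, assume the two tri-partitions coincide; the inequality $\beta'<\alpha'$ is part of the hypotheses, so only $\beta^{Ev}_1\le\beta'$ and $\alpha'\le\alpha^{Ev}_2$ remain to be proved. Take $x_2\in U$ with $\alpha^{Ev}_2=\frac{|X\cap[x_2]_{\mathcal R}|}{|[x_2]_{\mathcal R}|}$; by Definition~\ref{def:alpha}(ii) it lies in $POS^{Ev}_{(\alpha,\beta)}(X)=POS_{(\alpha',\beta')}(X)$, hence $\frac{|X\cap[x_2]_{\mathcal R}|}{|[x_2]_{\mathcal R}|}\ge\alpha'$, i.e. $\alpha^{Ev}_2\ge\alpha'$. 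Symmetrically, a witness $x_1$ realizing $\beta^{Ev}_1$ lies in $NEG^{Ev}_{(\alpha,\beta)}(X)=NEG_{(\alpha',\beta')}(X)$, which forces $\beta^{Ev}_1\le\beta'$. I expect the only genuinely delicate point to be the bookkeeping in $(\Leftarrow)$ that upgrades the two inclusions to equalities and simultaneously forces $BND_{(\alpha',\beta')}(X)$ to be empty: unlike in Theorem~\ref{teo:connection}, there is no inner pair $\beta^{Ev}_2,\alpha^{Ev}_1$ available to squeeze the probabilistic boundary, so its emptiness has to be extracted purely from the disjoint-covering structure of the two tri-partitions rather than from a numerical sandwich.
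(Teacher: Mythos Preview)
Your proof is correct and follows essentially the same strategy as the paper's: both directions hinge on the witnesses realizing $\alpha^{Ev}_2$ and $\beta^{Ev}_1$ together with the covering/disjointness of the two tri-partitions. The only difference is a mild streamlining in your $(\Leftarrow)$ direction: the paper verifies each of $POS^{Ev}_{(\alpha,\beta)}(X)=POS_{(\alpha',\beta')}(X)$ and $NEG^{Ev}_{(\alpha,\beta)}(X)=NEG_{(\alpha',\beta')}(X)$ by proving both inclusions explicitly (using, e.g., $\beta^{Ev}_1<\alpha'$ to rule out membership in $NEG^{Ev}_{(\alpha,\beta)}(X)$ for the reverse inclusion), whereas you prove only the forward inclusions and then invoke the fact that the left sides cover $U$ while the right sides are disjoint to force equality. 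Your packaging is slightly cleaner and also makes explicit the inequality $\beta^{Ev}_1<\alpha^{Ev}_2$ (via monotonicity of $Ev$), which the paper leaves implicit; the $(\Rightarrow)$ direction is identical in substance to the paper's, stated directly rather than by contradiction.
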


\begin{proof}
$(\Leftarrow).$ Let $\alpha',\beta' \in [0,1]$ such that $\beta^{Ev}_1 \leq \beta' < \alpha' \leq \alpha^{Ev}_2$, we need to prove that $POS_{(\alpha,\beta)}^{Ev}(X)=POS_{(\alpha',\beta')}(X)$, $NEG_{(\alpha,\beta)}^{Ev}(X)=NEG_{(\alpha',\beta')}(X)$, and $BND_{(\alpha,\beta)}^{Ev}$ $(X)=BND_{(\alpha',\beta')}(X)$.
\begin{description}
\item [$(POS_{(\alpha,\beta)}^{Ev}(X)=POS_{(\alpha',\beta')}(X))$.] Let $\bar{x} \in POS_{(\alpha,\beta)}^{Ev}(X)$. Then,  $\dfrac{|X \cap [\bar{x}]_{\mathcal{R}}|}{|[\bar{x}]_{\mathcal{R}}|} \geq \alpha^{Ev}_1$ from Definition \ref{def:alpha} (i). By hypothesis, $\alpha' \leq \alpha_2^{Ev}$. Finally, by the previous two inequalities, we obtain that $\dfrac{|X \cap [\bar{x}]_{\mathcal{R}}|}{|[\bar{x}]_{\mathcal{R}}|} \geq \alpha'$. Namely,  $\bar{x} \in POS_{(\alpha',\beta')}(X)$ from Definition \ref{def:PRS} (i). 

Let $\bar{x} \in POS_{(\alpha',\beta')}(X)$. Then, $\dfrac{|X \cap [\bar{x}]_{\mathcal{R}}|}{|[\bar{x}]_{\mathcal{R}}|} \geq \alpha'$ by Definition \ref{def:PRS} (i). Let $x_1 \in U$ such that $\beta^{Ev}_1=\dfrac{|X \cap [x_1]_{\mathcal{R}}|}{|[x_1]_{\mathcal{R}}|}$. Moreover, by hypothesis  $\beta^{Ev}_1 < \alpha'$. So, by the previous two inequalities, we get $\dfrac{|X \cap [\bar{x}]_{\mathcal{R}}|}{|[\bar{x}]_{\mathcal{R}}|} > \beta^{Ev}_1$. Then, by Definition \ref{def:alpha} (iii),  $\bar{x} \notin NEG_{(\alpha,\beta)}^{Ev}(X)$. Lastly, by \eqref{eq:tr} and by the hypothesis $BND_{(\alpha,\beta)}^{Ev}(X)=\emptyset$, we can conclude that $\bar{x} \in POS_{(\alpha,\beta)}^{Ev}(X)$. 

\item [$(NEG_{(\alpha,\beta)}^{Ev}(X)=NEG_{(\alpha',\beta')}(X))$.] Let $\bar{x} \in NEG_{(\alpha,\beta)}^{Ev}(X)$. Then, by Definition \ref{def:alpha} (iii), $\dfrac{|X \cap [\bar{x}]_{\mathcal{R}}|}{|[\bar{x}]_{\mathcal{R}}|} \leq \beta^{Ev}_1$. Additionally, we know that $\beta^{Ev}_1 \leq \beta'$ from hypothesis.   Then, $\dfrac{|X \cap [\bar{x}]_{\mathcal{R}}|}{|[\bar{x}]_{\mathcal{R}}|} \leq \beta'$, namely $\bar{x} \in NEG_{(\alpha',\beta')}(X)$ from Definition \ref{def:PRS} (ii). 

Let $\bar{x} \in  NEG_{(\alpha',\beta')}(X)$, then $\dfrac{|X \cap [\bar{x}]_{\mathcal{R}}|}{|[\bar{x}]_{\mathcal{R}}|} \leq \beta'$ from Definition \ref{def:PRS} (ii). By hypothesis, $\beta' < \alpha^{Ev}_2$. Thus, by Definition \ref{def:alpha} (ii), $\bar{x}$ cannot belong to $POS_{(\alpha,\beta)}^{Ev}(X)$. So, by \eqref{eq:tr} and the hypothesis $BND_{(\alpha,\beta)}^{Ev}(X)=\emptyset$, we can deduce that $\bar{x} \in NEG_{(\alpha,\beta)}^{Ev}(X).$ 
\item [$(BND_{(\alpha,\beta)}^{Ev}(X)=BND_{(\alpha',\beta')}(X))$.] This equality follows from $POS_{(\alpha,\beta)}^{Ev}(X)=POS_{(\alpha',\beta')}(X)$ and $NEG_{(\alpha,\beta)}^{Ev}(X)=NEG_{(\alpha',\beta')}(X)$, considering that the sets \\ $POS_{(\alpha,\beta)}^{Ev}(X), NEG_{(\alpha,\beta)}^{Ev}(X)$, and $BND_{(\alpha,\beta)}^{Ev}(X)$ (as well as $POS_{(\alpha',\beta')}(X)$, $NEG_{(\alpha',\beta')}(X)$, and $BND_{(\alpha',\beta')}(X)$) cover the universe $U$ (see \eqref{eq:tr1} and \eqref{eq:tr}). 
\end{description}

$(\Leftarrow).$  Let $\mathcal{T}_{(\alpha,\beta)}^{Ev}(X)=\mathcal{T}_{(\alpha',\beta')}(X)$, we intend to prove that $\beta^{Ev}_1 \leq \beta'$ and $\alpha' \leq \alpha^{Ev}_2$. 
\begin{description}
\item [$(\beta^{Ev}_1 \leq \beta')$.] Let $x_1 \in U$ such that $\beta^{Ev}_1 = \dfrac{|X \cap [x_1]_{\mathcal{R}}|}{|[x_1]_{\mathcal{R}}|}$. Of course, $x_1 \in NEG^{Ev}_{(\alpha,\beta)}(X)$  from Definition \ref{def:alpha} (iii).

It is clear that the inequality $\beta^{Ev}_1 > \beta'$ leads to a contradiction:
\begin{itemize}
\item if $\beta^{Ev}_1 > \beta'$, then $\bar{x} \notin NEG_{(\alpha',\beta')}(X)$ from Definition \ref{def:alpha} (iii);
\item but, this contradicts that $NEG^{Ev}_{(\alpha,\beta)}(X)=NEG_{(\alpha',\beta')}(X)$. 
\end{itemize}
So, $\beta^{Ev}_1 \leq \beta'$ must hold. 
\item [$(\alpha' \leq \alpha^{Ev}_2)$.]  Let $x_2 \in U$ such that $\alpha^{Ev}_2=\dfrac{|X \cap [x_2]_{\mathcal{R}}|}{|[x_2]_{\mathcal{R}}|}$. Then, $x_2 \in POS^{Ev}_{(\alpha,\beta)}(X)$ from Definition \ref{def:alpha} (ii). 

It is clear that the inequality $\alpha' > \alpha^{Ev}_2$ leads to a contradiction: 
\begin{itemize}
\item if $\alpha' > \alpha^{Ev}_2$, then $x_2 \notin POS_{(\alpha',\beta')}(X)$ from Definition \ref{def:alpha} (ii);
\item but, this contradicts that $ POS^{Ev}_{(\alpha,\beta)}(X)=POS_{(\alpha',\beta')}(X)$.
\end{itemize}
Finally,  $\alpha' \leq \alpha^{Ev}_2$ must hold.
\end{description}
\end{proof}

Examples of evaluative expressions satisfying the hypothesis of Theorem \ref{teo:2} can be obtained from the class   defined by \eqref{eq:delta}. Indeed, let $t \in [0,1]$, $\Delta_t$ is trivially an increasing function (i.e. $\Delta_t \in \mathcal{E}^+$) and the boundary region determined by $\Delta_t$ is always empty as shown by the following proposition. In addition, in Proposition \ref{pro:3}, the formula of the three regions that are related to $\Delta_t$ is rewritten so that the thresholds $\alpha$ and $\beta$ do not appear in it. 

\begin{proposition} \label{pro:3}
Let $t \in [0,1]$ and let $\alpha, \beta \in [0,1]$ such that $\beta < \alpha$, then
\begin{enumerate}
\item [(a)] $POS_{(\alpha,\beta)}^{\Delta_t}(X)=\left\{x \in U \ | \ \dfrac{|X \cap [x]_{\mathcal{R}}|}{|[x]_{\mathcal{R}}|} \geq t\right\}$;
\item [(b)] $NEG_{(\alpha,\beta)}^{\Delta_t}(X)=\left\{x \in U \ | \ \dfrac{|X \cap [x]_{\mathcal{R}}|}{|[x]_{\mathcal{R}}|} < t\right\}$;
\item [(c)] $BND_{(\alpha,\beta)}^{\Delta_t}(X)=\emptyset$.
\end{enumerate}
\end{proposition}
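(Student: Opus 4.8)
The plan is to reduce everything to the elementary observation that the function $\Delta_t$ defined by \eqref{eq:delta} takes only the two values $0$ and $1$, so that comparing $\Delta_t(a)$ with the thresholds $\alpha$ and $\beta$ becomes a purely combinatorial matter. First I would record the two boundary facts that follow from the standing hypothesis $\beta<\alpha$ together with $\alpha,\beta\in[0,1]$: namely $\alpha>0$ (otherwise $\beta<0$, contradicting $\beta\ge 0$) and $\beta<1$ (otherwise $\alpha>1$, contradicting $\alpha\le 1$).

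Next, for an arbitrary $a\in[0,1]$ I would establish the equivalences
\begin{equation*}
\Delta_t(a)\ge\alpha \iff \Delta_t(a)=1 \iff a\ge t,
\qquad
\Delta_t(a)\le\beta \iff \Delta_t(a)=0 \iff a< t.
\end{equation*}
The first chain holds because $\Delta_t(a)\in\{0,1\}$ and $\alpha>0$, so $\Delta_t(a)\ge\alpha$ forces $\Delta_t(a)=1$, while conversely $\Delta_t(a)=1\ge\alpha$ always; the last link is just the definition \eqref{eq:delta}. Symmetrically, since $\beta<1$, the inequality $\Delta_t(a)\le\beta$ forces $\Delta_t(a)=0$, and $\Delta_t(a)=0\le\beta$ always. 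Finally, the condition $\beta<\Delta_t(a)<\alpha$ can never be met: if $\Delta_t(a)=0$ it would give $\beta<0$, and if $\Delta_t(a)=1$ it would give $\alpha>1$, both impossible.

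To finish, I would instantiate $a=\dfrac{|X\cap[x]_{\mathcal{R}}|}{|[x]_{\mathcal{R}}|}$ (well-defined because $\mathcal{R}$ is reflexive, so $[x]_{\mathcal{R}}\neq\emptyset$) and feed the three equivalences into Definition \ref{def:ee}. Part (i) of that definition together with the first equivalence yields (a); part (ii) together with the second equivalence yields (b); and part (iii) together with the observation that $\beta<\Delta_t(\cdot)<\alpha$ is unsatisfiable yields (c).

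I do not expect any genuine obstacle here: the only point requiring a moment's care is making sure the degenerate threshold values are handled, i.e. explicitly invoking $\beta<\alpha$ to exclude $\alpha=0$ and $\beta=1$ before claiming that a $\{0,1\}$-valued quantity meeting $\ge\alpha$ must equal $1$ and one meeting $\le\beta$ must equal $0$. Everything else is a direct substitution into Definitions \ref{def:ee} and \eqref{eq:delta}.
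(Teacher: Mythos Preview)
Your proposal is correct and follows essentially the same route as the paper: both arguments unfold Definition~\ref{def:ee} and use that $\Delta_t$ is $\{0,1\}$-valued, so the threshold comparisons collapse to the condition $a\ge t$ versus $a<t$. The only cosmetic differences are that you are more explicit about why $\alpha>0$ and $\beta<1$ (the paper leaves this implicit), and that you prove (c) directly by showing $\beta<\Delta_t(a)<\alpha$ is impossible, whereas the paper deduces (c) indirectly from (a), (b), and the covering property~\eqref{eq:tr}.
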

\begin{proof} (a). Let $\bar{x} \in U$. Thus, 
$\bar{x} \in POS_{(\alpha,\beta)}^{Ev}(X)$ if and only if
\begin{equation} \label{eq:pro1}
\Delta_t\left(\dfrac{|X \cap [\bar{x}]_{\mathcal{R}}|}{|[\bar{x}]_{\mathcal{R}}|}\right) \geq \alpha 
\end{equation}
from Definition \ref{def:ee} (i).

By \eqref{eq:delta}, the inequality \eqref{eq:pro1} is true if and only if $\dfrac{|X \cap [\bar{x}]_{\mathcal{R}}|}{|[\bar{x}]_{\mathcal{R}}|} \geq t$.

(b). Let $\bar{x} \in U$. Then, $\bar{x} \in NEG_{(\alpha,\beta)}^{Ev}(X)$ if and only if
\begin{equation} \label{eq:pro2}
\Delta_t\left(\dfrac{|X \cap [\bar{x}]_{\mathcal{R}}|}{|[\bar{x}]_{\mathcal{R}}|}\right) \leq \beta 
\end{equation}
from Definition \ref{def:ee} (ii).

By \eqref{eq:delta}, the inequality \eqref{eq:pro2} is true if and only if $\dfrac{|X \cap [x]_{\mathcal{R}}|}{|[x]_{\mathcal{R}}|} < t$.

(c). Notice that $\{x \in U \ | \ \dfrac{|X \cap [x]_{\mathcal{R}}|}{|[x]_{\mathcal{R}}|} \geq t\} \cup \{x \in U \ | \ \dfrac{|X \cap [x]_{\mathcal{R}}|}{|[x]_{\mathcal{R}}|} < t\}=U$. Moreover, we have proved that $POS_{(\alpha,\beta)}^{Ev}(X)=\{x \in U \ | \ \dfrac{|X \cap [x]_{\mathcal{R}}|}{|[x]_{\mathcal{R}}|} \geq t\}$ and $NEG_{(\alpha,\beta)}^{Ev}(X)=\{x \in U \ | \ \dfrac{|X \cap [x]_{\mathcal{R}}|}{|[x]_{\mathcal{R}}|} < t\}$. Hence, according to \eqref{eq:tr},  $BND_{(\alpha,\beta)}^{Ev}(X)$ must be empty.  
\end{proof}

\begin{example} \label{ex:7}
Let us focus on $\mathcal{T}^{\Delta_{0.5}}_{(\alpha,\beta)}(X)$, where $U$, $X$, and $\mathcal{R}$ are defined in Example \ref{esempio3}. By Proposition \ref{pro:3}, it is easy to verify that $POS_{(\alpha,\beta)}^{\Delta_{0.5}}(X)=C_2 \cup C_3$,  $NEG_{(\alpha,\beta)}^{\Delta_{0.5}}(X)=C_1$, and $BND_{(\alpha,\beta)}^{\Delta_{0.5}}(X)=\emptyset$ for each $\alpha, \beta \in [0,1]$ with $\beta < \alpha$. Furthermore, according to Theorem  \ref{teo:2}, $POS_{(\alpha',\beta')}(X)=C_2 \cup C_3$,   $NEG_{(\alpha',\beta')}(X)$ $=C_1$, and $BND_{(\alpha',\beta')}(X)=\emptyset$, for each $\alpha', \beta' \in [0,1]$ such that $\beta_1^{\Delta_{0.5}} \leq \beta' < \alpha' \leq \alpha_2^{\Delta_{0.5}}$, where $\beta_1^{\Delta_{0.5}}=0$, $\alpha_2^{\Delta_{0.5}}=0.9$. For example, if we choose $\alpha'=0.2$ and $\beta'=0.7$, we obtain $POS_{(0.7,0.2)}(X)=C_2 \cup C_3$,   $NEG_{(0.7,0.2)}(X)=C_1$, and $BND_{(0.7,0.2)}(X)=\emptyset$. 
\end{example}

Now, let us suppose that the negative region is empty.

\begin{theorem} \label{teo:3}
Let $Ev \in \mathcal{E}^+$ such that $NEG_{(\alpha,\beta)}^{Ev} =\emptyset$ and $POS_{(\alpha,\beta)}^{Ev}, BND_{(\alpha,\beta)}^{Ev} \neq \emptyset$. Let $\alpha', \beta' \in [0,1]$ such that $\beta' < \alpha'$. Then,
\begin{center}
$\mathcal{T}_{(\alpha,\beta)}^{Ev}(X)=\mathcal{T}_{(\alpha',\beta')}(X)$ \ if and only if \
   $\beta' \in [0, \beta^{Ev}_2)$ and $\alpha' \in (\alpha^{Ev}_1, \alpha^{Ev}_2]$.
\end{center}
\end{theorem}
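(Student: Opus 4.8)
The plan is to mirror the structure of the proof of Theorem \ref{teo:connection}, adapting it to the degenerate situation $NEG_{(\alpha,\beta)}^{Ev}(X) = \emptyset$. The key structural fact is that when the negative region is empty, the quantity $\beta_1^{Ev}$ is undefined (the max over an empty set), so only $\beta_2^{Ev}$, $\alpha_1^{Ev}$, $\alpha_2^{Ev}$ survive; moreover the lower bound on $\beta'$ is simply $0$ rather than $\beta_1^{Ev}$, since there are no constraints coming from negative-region elements. For the forward direction ($\Rightarrow$), I would assume $\mathcal{T}_{(\alpha,\beta)}^{Ev}(X) = \mathcal{T}_{(\alpha',\beta')}(X)$ and derive the four inequalities $\beta' < \beta_2^{Ev}$, $\alpha_1^{Ev} < \alpha'$, and $\alpha' \le \alpha_2^{Ev}$ (the bound $\beta' \ge 0$ being automatic). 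Each is obtained by a contradiction argument exactly as in Theorem \ref{teo:connection}: pick a witness $x$ realizing the relevant extremum (an element of $BND^{Ev}_{(\alpha,\beta)}(X)$ for $\beta_2^{Ev}$, $\alpha_1^{Ev}$; an element of $POS^{Ev}_{(\alpha,\beta)}(X)$ for $\alpha_2^{Ev}$), and show that violating the desired inequality would move $x$ into the wrong region of $\mathcal{T}_{(\alpha',\beta')}(X)$, contradicting equality of the tri-partitions.

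For the reverse direction ($\Leftarrow$), assume $\beta' \in [0,\beta_2^{Ev})$ and $\alpha' \in (\alpha_1^{Ev},\alpha_2^{Ev}]$; I would prove the three region equalities in the order $POS$, $BND$, $NEG$. For $POS_{(\alpha,\beta)}^{Ev}(X) = POS_{(\alpha',\beta')}(X)$: if $\bar{x} \in POS_{(\alpha,\beta)}^{Ev}(X)$ then $\frac{|X\cap[\bar x]_\mathcal{R}|}{|[\bar x]_\mathcal{R}|} \ge \alpha_2^{Ev} \ge \alpha'$ by Definition \ref{def:alpha}(ii), so $\bar x \in POS_{(\alpha',\beta')}(X)$; conversely if $\bar x \in POS_{(\alpha',\beta')}(X)$ then $\frac{|X\cap[\bar x]_\mathcal{R}|}{|[\bar x]_\mathcal{R}|} \ge \alpha' > \alpha_1^{Ev}$, which by Definition \ref{def:alpha}(i) forces $\bar x \notin BND_{(\alpha,\beta)}^{Ev}(X)$, and since $NEG_{(\alpha,\beta)}^{Ev}(X) = \emptyset$, Equation \eqref{eq:tr} gives $\bar x \in POS_{(\alpha,\beta)}^{Ev}(X)$. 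For $BND_{(\alpha,\beta)}^{Ev}(X) = BND_{(\alpha',\beta')}(X)$: if $\bar x \in BND_{(\alpha,\beta)}^{Ev}(X)$ then $\beta_2^{Ev} \le \frac{|X\cap[\bar x]_\mathcal{R}|}{|[\bar x]_\mathcal{R}|} \le \alpha_1^{Ev}$ by Definition \ref{def:alpha}(i),(iv), hence $\beta' < \frac{|X\cap[\bar x]_\mathcal{R}|}{|[\bar x]_\mathcal{R}|} < \alpha'$, i.e. $\bar x \in BND_{(\alpha',\beta')}(X)$; conversely if $\bar x \in BND_{(\alpha',\beta')}(X)$ then $\frac{|X\cap[\bar x]_\mathcal{R}|}{|[\bar x]_\mathcal{R}|} < \alpha' \le \alpha_2^{Ev}$ rules out membership in $POS_{(\alpha,\beta)}^{Ev}(X)$ by Definition \ref{def:alpha}(ii), and since $NEG_{(\alpha,\beta)}^{Ev}(X) = \emptyset$, Equation \eqref{eq:tr} gives $\bar x \in BND_{(\alpha,\beta)}^{Ev}(X)$. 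Finally $NEG_{(\alpha,\beta)}^{Ev}(X) = NEG_{(\alpha',\beta')}(X)$ follows from the other two equalities together with \eqref{eq:tr1} and \eqref{eq:tr}: $NEG_{(\alpha',\beta')}(X) = U \setminus (POS_{(\alpha',\beta')}(X) \cup BND_{(\alpha',\beta')}(X)) = U \setminus (POS_{(\alpha,\beta)}^{Ev}(X) \cup BND_{(\alpha,\beta)}^{Ev}(X)) = \emptyset = NEG_{(\alpha,\beta)}^{Ev}(X)$.

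The only subtlety, and the main thing to get right rather than a genuine obstacle, is that we cannot invoke Proposition \ref{pro:1} here since it presupposes all three regions nonempty. In the present setting I only need the single inequality $\beta_2^{Ev} \le \alpha_1^{Ev}$, which holds because both are values of $\frac{|X\cap[x]_\mathcal{R}|}{|[x]_\mathcal{R}|}$ for $x$ ranging over $BND_{(\alpha,\beta)}^{Ev}(X)$ (one the min, one the max), exactly as in the last bullet of Proposition \ref{pro:1}'s proof; this together with $\beta' < \beta_2^{Ev}$ and $\alpha' > \alpha_1^{Ev}$ guarantees the interval $(\beta',\alpha')$ is genuinely wider than $[\beta_2^{Ev},\alpha_1^{Ev}]$, which is what makes the $BND$ equality go through. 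Everywhere that the proof of Theorem \ref{teo:connection} used $\beta_1^{Ev}$ to exclude an element from $NEG_{(\alpha,\beta)}^{Ev}(X)$, we instead use the hypothesis $NEG_{(\alpha,\beta)}^{Ev}(X) = \emptyset$ directly, which simplifies those steps.
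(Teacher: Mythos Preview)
Your proposal is correct and follows exactly the approach the paper intends: the paper's own proof of Theorem~\ref{teo:3} is simply ``The proof is similar to that of Theorems~\ref{teo:connection} and~\ref{teo:2}. So, it is omitted.'' Your adaptation---replacing every appeal to $\beta_1^{Ev}$ by the hypothesis $NEG_{(\alpha,\beta)}^{Ev}(X)=\emptyset$, and noting that the inequality $\beta_2^{Ev}\le\alpha_1^{Ev}$ survives without Proposition~\ref{pro:1}---is precisely the intended fill-in.
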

\begin{proof}
The proof is similar to that of Theorems \ref{teo:connection} and \ref{teo:2}. So, it is omitted. 
\end{proof}
\begin{example} \label{ex:8}
Let $U=\{u_1, \ldots, u_{30}\}$. We supposed that $U$ is divided into the following  equivalence classes: $C_1=\{u_1, \ldots, u_5\}$, $C_2=\{u_6, \ldots, u_{10}\}$, and $C_3=\{u_{11}, \ldots, u_{30}\}$. 

Also, let $X=\{u_1, \ldots u_{28}\}$, we are interested in $\mathcal{T}^{BiVe}_{(0.8,0.4)}(X)$. Then, \begin{equation*} 
\dfrac{|X \cap [x]_\mathcal{R}|}{|[x]_\mathcal{R}|}=\begin{cases}
0.9 & \mbox{ if } x \in C_3,\\
1 & \mbox{ if } x \in C_1 \cup C_2.
\end{cases}
\end{equation*}
and 
 \begin{equation*}
 Bi Ve \left(\frac{|X \cap [x]_\mathcal{R}|}{|[x]_\mathcal{R}|}\right)= \begin{cases} 0.58 & \mbox{ if } x \in C_3 , \\  1 & \mbox{ if } x \in C_1 \cup C_2. 
 \end{cases}
\end{equation*}
Thus, by Definition \ref{def:ee},  $POS_{(0.8,0.4)}^{BiVe}(X)=C_1 \cup C_2$, $BND_{(0.8,0.4)}^{BiVe}(X)=C_3$, and $NEG_{(0.8,0.4)}^{BiVe}(X)=\emptyset$. 
Moreover, $\beta^{Ev}_2=0.9$, $\alpha_1^{Bi Ve}=0.9$, and  $\alpha_2^{Bi Ve}=1$. So, according to Theorem \ref{teo:3}, $POS_{(\alpha',\beta')}(X)=C_1 \cup C_2$, $BND_{(\alpha',\beta')}(X)=C_3$, and $NEG_{(\alpha',\beta')}(X)=C_3$ for each $\beta' \in [0, 0.9)$ and $\alpha' \in (0.9, 1]$. For example, if $\alpha'=0.95$ and $\beta'=0.8$, then $POS_{(0.8,0.95)}(X)=C_1 \cup C_2$, $BND_{(0.8,0.95)}(X)=C_3$, and $NEG_{(0.8,0.95)}(X)=\emptyset$.  
\end{example}
Finally, the case of an empty positive region.
\begin{theorem} \label{teo:4}
Let $Ev \in \mathcal{E}^+$ such that $POS_{(\alpha,\beta)}^{Ev} =\emptyset$ and $NEG_{(\alpha,\beta)}^{Ev}, BND_{(\alpha,\beta)}^{Ev} \neq \emptyset$. Let $\alpha', \beta' \in [0,1]$ such that $\beta' < \alpha'$. Then,
\begin{center}
$\mathcal{T}_{(\alpha,\beta)}^{Ev}(X)=\mathcal{T}_{(\alpha',\beta')}(X)$ \ if and only if \
   $\beta' \in [\beta^{Ev}_1, \beta^{Ev}_2)$ and $\alpha' \in (\alpha^{Ev}_1, 1]$.
\end{center}
\end{theorem}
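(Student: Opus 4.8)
The plan is to follow the same template used for Theorems \ref{teo:connection}, \ref{teo:2}, and \ref{teo:3}, adapting it to the one structural novelty here: since $POS_{(\alpha,\beta)}^{Ev}(X)=\emptyset$, the quantity $\alpha_2^{Ev}$ is undefined, and its former role as an upper bound for $\alpha'$ is taken over by the trivial bound $1$. First I would record the part of the ordering of Proposition \ref{pro:1} that survives in this case. The argument proving ``$\beta_1^{Ev}<\beta_2^{Ev}$'' there (take witnesses $x_1\in NEG_{(\alpha,\beta)}^{Ev}(X)$ and $x_2\in BND_{(\alpha,\beta)}^{Ev}(X)$ realising these values, use $Ev(\beta_1^{Ev})\le\beta<Ev(\beta_2^{Ev})$ together with monotonicity of $Ev$) only needs $NEG_{(\alpha,\beta)}^{Ev}(X),BND_{(\alpha,\beta)}^{Ev}(X)\neq\emptyset$, so it still applies; combined with the trivial $\beta_2^{Ev}\le\alpha_1^{Ev}$ (both are extrema of the same set $\{|X\cap[x]_{\mathcal{R}}|/|[x]_{\mathcal{R}}| : x\in BND_{(\alpha,\beta)}^{Ev}(X)\}$) this yields $0\le\beta_1^{Ev}<\beta_2^{Ev}\le\alpha_1^{Ev}\le1$, and in particular $\beta_1^{Ev}<\alpha_1^{Ev}$.

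For the $(\Leftarrow)$ direction, fix $\beta'\in[\beta_1^{Ev},\beta_2^{Ev})$ and $\alpha'\in(\alpha_1^{Ev},1]$ and prove the three set equalities. Since $POS_{(\alpha,\beta)}^{Ev}(X)=\emptyset$, \eqref{eq:tr} gives $U=NEG_{(\alpha,\beta)}^{Ev}(X)\cup BND_{(\alpha,\beta)}^{Ev}(X)$, so by Definition \ref{def:alpha} every $x\in U$ satisfies $|X\cap[x]_{\mathcal{R}}|/|[x]_{\mathcal{R}}|\le\max\{\beta_1^{Ev},\alpha_1^{Ev}\}=\alpha_1^{Ev}<\alpha'$; hence $POS_{(\alpha',\beta')}(X)=\emptyset$ by Definition \ref{def:PRS}(i), matching $POS_{(\alpha,\beta)}^{Ev}(X)$. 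For $NEG_{(\alpha,\beta)}^{Ev}(X)=NEG_{(\alpha',\beta')}(X)$ and $BND_{(\alpha,\beta)}^{Ev}(X)=BND_{(\alpha',\beta')}(X)$ I would transcribe the corresponding double inclusions from the proof of Theorem \ref{teo:connection}: the ``$\subseteq$'' inclusions use only $|X\cap[\bar x]_{\mathcal{R}}|/|[\bar x]_{\mathcal{R}}|\le\beta_1^{Ev}\le\beta'$ (for $NEG$) and $\beta_2^{Ev}\le|X\cap[\bar x]_{\mathcal{R}}|/|[\bar x]_{\mathcal{R}}|\le\alpha_1^{Ev}$ together with $\beta'<\beta_2^{Ev}$ and $\alpha_1^{Ev}<\alpha'$ (for $BND$), while the ``$\supseteq$'' inclusions combine $\beta'<\beta_2^{Ev}$ (resp.\ $\beta_1^{Ev}\le\beta'$) with Definition \ref{def:alpha}, the emptiness of $POS_{(\alpha,\beta)}^{Ev}(X)$, and \eqref{eq:tr} to place $\bar x$ in the correct region; crucially, no use of $\alpha_2^{Ev}$ is needed anywhere, since emptiness of $POS_{(\alpha,\beta)}^{Ev}(X)$ does the work instead. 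The equality for $BND$ then also follows formally from the other two together with \eqref{eq:tr1} and \eqref{eq:tr}.

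For the $(\Rightarrow)$ direction, assume $\mathcal{T}_{(\alpha,\beta)}^{Ev}(X)=\mathcal{T}_{(\alpha',\beta')}(X)$ and derive $\beta_1^{Ev}\le\beta'<\beta_2^{Ev}$ and $\alpha_1^{Ev}<\alpha'\le1$. The bound $\alpha'\le1$ is automatic. For the remaining three I would reuse the ``pick an extremal witness, obtain a contradiction'' scheme of the $(\Rightarrow)$ parts of Theorems \ref{teo:connection} and \ref{teo:2}: a witness $x_1$ with $|X\cap[x_1]_{\mathcal{R}}|/|[x_1]_{\mathcal{R}}|=\beta_1^{Ev}$ lies in $NEG_{(\alpha,\beta)}^{Ev}(X)$, so $\beta_1^{Ev}>\beta'$ would remove it from $NEG_{(\alpha',\beta')}(X)$, contradicting $NEG_{(\alpha,\beta)}^{Ev}(X)=NEG_{(\alpha',\beta')}(X)$; a witness $x_2$ with ratio $\beta_2^{Ev}$ lies in $BND_{(\alpha,\beta)}^{Ev}(X)$, so $\beta'\ge\beta_2^{Ev}$ would put it into $NEG_{(\alpha',\beta')}(X)$, contradicting $BND_{(\alpha,\beta)}^{Ev}(X)=BND_{(\alpha',\beta')}(X)$; and a witness $x_1'$ with ratio $\alpha_1^{Ev}$ lies in $BND_{(\alpha,\beta)}^{Ev}(X)$, so $\alpha'\le\alpha_1^{Ev}$ would put it into $POS_{(\alpha',\beta')}(X)$, contradicting $POS_{(\alpha,\beta)}^{Ev}(X)=\emptyset=POS_{(\alpha',\beta')}(X)$. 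I do not anticipate a real obstacle; the only point requiring care is that Proposition \ref{pro:1} is stated with all three regions non-empty, so I must explicitly re-extract the portion of its proof that stays valid here, and then note that the emptiness of $POS_{(\alpha,\beta)}^{Ev}(X)$ precisely compensates for the missing constraint from $\alpha_2^{Ev}$ — both in bounding all ratios over $U$ by $\alpha_1^{Ev}$ and in forcing $\alpha_1^{Ev}<\alpha'$.
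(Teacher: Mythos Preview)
Your proposal is correct and matches the paper's approach exactly: the paper's own proof of Theorem~\ref{teo:4} simply reads ``The proof is similar to that of Theorems~\ref{teo:connection} and~\ref{teo:2}. So, it is omitted.'' You have carried out precisely that adaptation, correctly identifying that the emptiness of $POS_{(\alpha,\beta)}^{Ev}(X)$ replaces the role of $\alpha_2^{Ev}$ and re-deriving the needed fragment of Proposition~\ref{pro:1}.
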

\begin{proof}
The proof is similar to that of Theorems \ref{teo:connection} and \ref{teo:2}. So, it is omitted. 
\end{proof}
\begin{example}
Consider the universe $U$ and the equivalence classes $C_1, C_2$, and $C_3$, which are defined by Example \ref{ex:8}. Let $X=\{u_1, u_6, u_{11}, \ldots u_{28}\}$, we focus on $\mathcal{T}_{(0.7,0.2)}^{BiVe}(X)$. Then, \begin{equation*} 
\dfrac{|X \cap [x]_\mathcal{R}|}{|[x]_\mathcal{R}|}=\begin{cases}
0.5 & \mbox{ if } x \in C_1 \cup C_2,\\
0.9 & \mbox{ if } x \in C_3.
\end{cases}
\end{equation*}
and
\begin{equation*} 
 Bi Ve \left(\frac{|X \cap [x]_\mathcal{R}|}{|[x]_\mathcal{R}|}\right)= \begin{cases} 0.58 & \mbox{ if } x \in C_3 , \\  0 & \mbox{ if } x \in C_1 \cup C_2. 
 \end{cases}
 \end{equation*}
 By Definition \ref{def:ee},  $NEG_{(0.7,0.2)}^{Bi Ve}(X)=C_1 \cup C_2$,  $BND_{(0.7,0.2)}^{Bi Ve}(X)=C_3$, and \\$POS_{(0.7,0.2)}^{Bi Ve}(X)=\emptyset$. Also, $\beta^{Ev}_1=0.5$, $\beta^{Ev}_2=0.9$, $\alpha_1^{Ev}=0.9$.  Thus, according to Theorem \ref{teo:4}, $NEG_{(\alpha',\beta')}(X)=C_1 \cup C_2$,  $BND_{(\alpha',\beta')}(X)=C_3$, and $POS_{(\alpha',\beta')}(X)=\emptyset$ for each $\beta' \in [0.5,0.9)$ and $\alpha' \in (0.9,1]$. For example, let $(\alpha',\beta')=(0.95, 0.6)$, we can easily verify that $NEG_{(0.95,0.6)}(X)=C_1 \cup C_2$,  $BND_{(0.95,0.6)}(X)=C_3$, and $POS_{(0.95,0.6)}(X)=\emptyset$. 
\end{example}
\section{Conclusions and future directions}

This work proposes a novel model for three-way decisions based on the concept of evaluative linguistic expressions. Thus, a new way is provided to divide the initial universe into three regions with the corresponding decisions rules. Moreover, our results allow decision-makers to give a linguistic  interpretation to the regions already obtained using the probabilistic approach. Let us indicate some  possible directions to continue this work. Firstly, we need to extend the results of Section \ref{sec:4} to the evaluative expressions that are not necessarily represented by increasing functions. Then, we want to deepen the study of linguistic-regions by comparing our methods with those presented in \cite{yao2012outline}. In addition, we intend to understand how the decisions about the elements change using different evaluative expressions. Finally, we could analyze the logical relations between the linguistic regions determined by a given evaluative expression and investigate their consequences in terms of decisions by constructing an hexagon of opposition. 
\bibliographystyle{unsrt}

\bibliography{bibliografiaEx}
\end{document}